\def\BibTeX{{\rm B\kern-.05em{\sc i\kern-.025em b}\kern-.08em T\kern-.1667em\lower.7ex\hbox{E}\kern-.125emX}}
\newtheorem{theorem}{Theorem}
\newtheorem{lemma}{Lemma}
\newtheorem{definition}{Definition}
\newtheorem{problem}{Problem}
\newcommand{\descr}[1]{\medskip\noindent\textbf{#1}.}
\renewcommand{\paragraph}{\descr}
\newcommand{\R}{\mathbb{R}}
\newcommand{\revision}[1]{{\color{black}{#1}}}
\DeclareMathOperator*{\argmax}{arg\,max}
\newcommand{\imax}{\ensuremath{i\textsubscript{max}}\xspace}
\newcommand{\ofri}{\textsc{OFr-I}\xspace}
\newcommand{\gfr}{\textsc{GFr}\xspace}
\newcommand{\gpr}{\textsc{GPr}\xspace}
\newcommand{\hgc}{\textsc{HGc}\xspace}
\newcommand{\ofrilong}{{Optimal Full-Row Improved}\xspace}
\newcommand{\gfrlong}{{Greedy Full-Row}\xspace}
\newcommand{\gprlong}{{Greedy Partial-Row}\xspace}
\newcommand{\hgclong}{{Heuristic General-case}\xspace}
\newcommand{\optsc}{\textsc{OptSa}\xspace} 
\newcommand{\gdyme}{\textsc{GdyME}\xspace} 
\newcommand{\gdymc}{\textsc{GdyMC}\xspace} 
\newcommand{\apxmre}{\textsc{ApxMRE}\xspace} 
\newcommand{\apxmrc}{\textsc{ApxMRC}\xspace} 
\newcommand{\optsclong}{{Optimum Single Access}\xspace}
\newcommand{\gdymelong}{{Greedy Max Element}\xspace}
\newcommand{\gdymclong}{{Greedy Max Cumulative}\xspace}
\newcommand{\apxmrelong}{{Approximate Max Ratio Element}\xspace}
\newcommand{\apxmrclong}{{Approximate Max Ratio Cumulative}\xspace}
\newcommand{\prob}{OASP\xspace} 
\newcommand{\problong}{Orienteering Aisle-graphs Single-access Problem\xspace}
\newcommand{\opt}{\textsc{OPT}\xspace}
\newcommand{\alg}{\textsc{ALG}\xspace}
\begin{document}
	
\title{Speeding up Routing Schedules on Aisle-Graphs with Single Access}

\author{
	Francesco Betti Sorbelli,
	Stefano Carpin,~\IEEEmembership{Senior~Member,~IEEE,}
	Federico Corò,
	Sajal K.~Das,~\IEEEmembership{Fellow,~IEEE,}\\
	Alfredo Navarra, and 
	Cristina M.~Pinotti,~\IEEEmembership{Senior~Member,~IEEE}
	\thanks{
Francesco Betti Sorbelli and Sajal Das are with Dept. of Computer Science, Missouri Univ. of Science and Technology, Rolla, MO, USA.
Stefano Carpin is with the Department of Computer Science and Engineering, University of California, Merced, CA, USA.		
Federico Corò is with the Department of Computer Science, Sapienza University of Rome, Italy.
Alfredo Navarra and Cristina M.~Pinotti are with Department of Computer Science and Mathematics, University of Perugia, Italy. \\
 S. Carpin was partially supported USDA-NIFA under award \# 2017-67021-25925. Any opinions, findings, conclusions, or recommendations expressed in this publication are those of the authors and do not necessarily reflect the views of the funding agencies.}
}

\maketitle

\begin{abstract}
In this paper, we study the \problong (\prob), a variant of the orienteering problem for a robot moving in a so-called \emph{single-access aisle-graph}, i.e., 
a graph consisting of a set of rows that can be accessed from one side only. 
Aisle-graphs model, among others, vineyards or warehouses. Each aisle-graph vertex is associated with a reward that a robot obtains when visits the vertex itself. 
As the robot's energy is limited, only a subset of vertices can be visited with a fully charged battery.
The objective is to  maximize the total reward collected by the robot with a battery charge.
We first propose an optimal algorithm that solves \prob in $\mathcal{O}(m^2n^2)$ time for aisle-graphs with a single access consisting of $m$ rows, each with $n$ vertices. 
With the goal of designing faster solutions, we propose four greedy sub-optimal algorithms 
that run in at most $\mathcal{O}(mn\ (m + n))$ time. 
For two of them, we guarantee an approximation ratio of $\frac{1}{2}(1 - \frac{1}{e})$,  
where $e$ is the base of the natural logarithm,
on the total reward by exploiting the well-known submodularity property.
Experimentally, we show that these algorithms collect more than $80\%$ of the optimal reward.
\end{abstract}

\begin{IEEEkeywords}
Orienteering problem, routing, submodularity.
\end{IEEEkeywords}

\IEEEpeerreviewmaketitle

\maketitle

\section{Introduction}\label{sec:introduction}
\IEEEPARstart{R}{oute} planning is a key problem in robotics and sensor networks.
A classic instance requires that the robot visits multiple Points of Interest (POIs) to perform some tasks, such as to collect data, deliver a package, or perform some repairs.
Depending on the environment in which a robot moves, and based on its capabilities, different trajectories could be calculated.
The decision-making process of selecting a trajectory 
is inherently tied to the energy (fuel) consumption, because a robot must be able to reach its destinations, perform the assigned tasks, and then return to a recharge/refueling station before it runs out of power and remains stranded in the environment.

In this paper, we focus on a robot moving on a specific environment modeled as a graph that belongs to the so-called \emph{aisle-graphs} family recently introduced in~\cite{thayer2018routing}. 
Specifically, we assume the environment is modeled by a set of vertices in rows (\emph{aisles}) 
accessible through one column (\emph{junction line}) at one endpoint of the rows (see Figures~\ref{img:applications} and~\ref{fig:graph-sc}).
This is indeed the topology that can be abstracted from Figure~\ref{img:vineyard} and Figure~\ref{img:warehouse}
when the robots perform complex tasks.
In a vineyard, for example, when the robot moves along a row it has grapevines on its left and right.
For complex tasks, like picking grapes in vineyards~\cite{xiong2018green} (or pruning trees in orchards~\cite{botterill2017robot}),
the robot must act on both the left and right side of the aisles.
\revision{Since the growth of grapes on vines is not homogeneous inside a vineyard, often an indiscriminate full visit is not the most appropriate choice, especially for autonomous energy-constrained vehicles.}
The aisle can then be seen as a two-line aisle: going back and forth, the robot uses the two different lines and
serves always the grapevines on its left (see Figure~\ref{fig:example}). 
Consequently, in this scenario, the robot always enters and exits from the same side of the vineyard.
The same concept can be applied inside a warehouse, like that in Figure~\ref{img:warehouse}.
In fact, in a warehouse aisle, there are shelves  on the left and on the right
from which the robot can pick items.
\revision{Interestingly, a commercial solution which implements a warehouse system with single access for pallets (flat wooden structure) is proposed in~\cite{www-mecalux}.}


\begin{figure}[ht]
	\centering
	\subfloat[Vineyard.]{%
		\includegraphics[width=7.4cm]{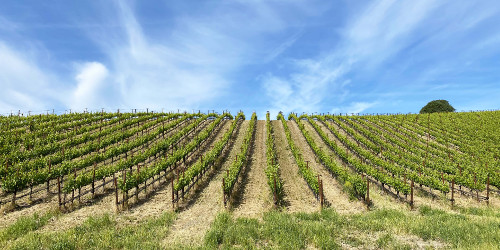}
		\label{img:vineyard}
	}
	\hfill
	\subfloat[Warehouse.]{%
		\includegraphics[width=7.4cm]{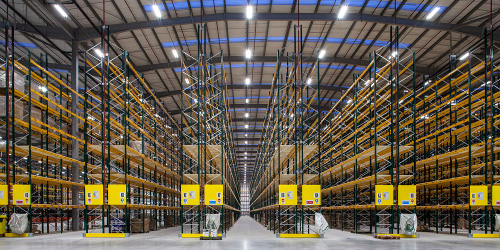}
		\label{img:warehouse}
	}
	\caption{Real examples of aisle-structures with single access.}
	\label{img:applications}
\end{figure}

In the applications we sketched above, the budget
is determined by the maximal distance a robot can travel on a single battery charge, 
the cost of an edge is its length (or the amount of energy spent to traverse it),
and the reward of a vertex is the utility generated by the robot by performing
an assigned task at a certain location associated with the vertex.
Given a single-access aisle-graph with a reward associated to each vertex, 
and assuming an unitary cost for traversing any edge,
our objective is to determine a 
cycle in the graph that can be traversed without exceeding the  preassigned constant budget
and maximizes the total sum of the rewards of the visited vertices.
This is a new variant of the well known \emph{Orienteering} 
Problem (OP)~\cite{golden1987orienteering}, called {\em \problong} (\prob).
As in the original orienteering problem, the reward of visiting a vertex is collected only the first time the vertex is visited by the robot, whereas the cost for traversing an edge
is charged as many times as the robot traverses such an edge. 

\begin{figure}[ht]
	\centering
 	\includegraphics[scale=0.8]{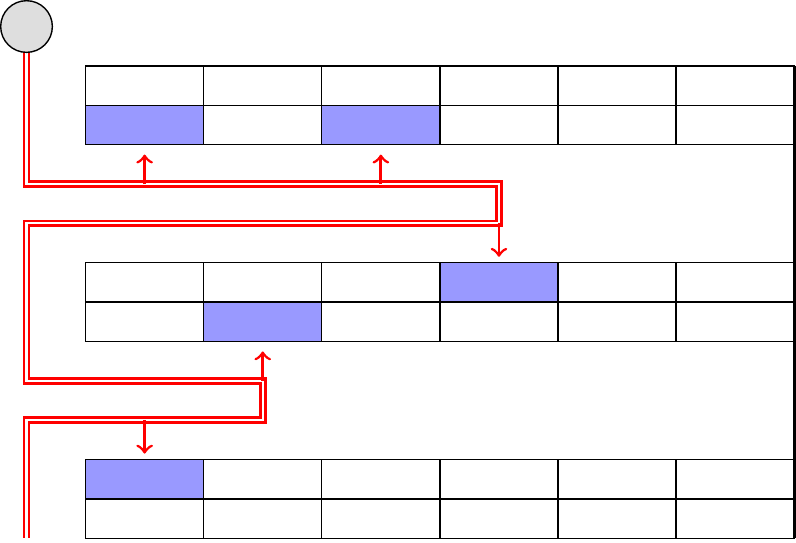}
	\caption{Abstraction of aisle-graph.}
	\label{fig:example}
\end{figure}

As will be discussed below, for general graphs, \revision{the orienteering problem is known to be $\mathit{NP}$-hard~\cite{golden1987orienteering}}, 
thus motivating the design of heuristics, or approximation algorithms. 
However, for the special class of
aisle-graphs with a single access,  the \prob variant of the orienteering problem
is polynomially solvable as we prove 
in Section~\ref{sec:opt-alg}. 
The optimal algorithm \optsc, whose time complexity is a polynomial of grade $4$ in the maximum between the number of aisles and the number of vertices composing a single aisle,
may be not scalable in  real-world applications with tens of thousand aisles, each with hundreds
of vertices.
For example, commercial vineyards often feature
units (called blocks) with more than $50,000$ vines arranged in hundreds of rows  with $200$
or more trees.
The potentially large size of aisle-graphs and the fact that the robot's trajectory has 
to be frequently recomputed due to the reward changes, motivate us 
to search also for simpler and faster 
sub-optimal solutions with guaranteed reward.
Accordingly, we then propose four greedy algorithms, and 
for two of them we prove a constant  factor approximation ratio between 
the reward of their solutions and that of the  optimal solution.


\paragraph{Contributions}
Our results are summarized as follows:
\begin{itemize}
    \item We provide a polynomial time optimal algorithm, called \optsclong (\optsc), for the \problong (\prob).
    
	\item We design two greedy heuristic algorithms, called \gdymelong (\gdyme) and \gdymclong (\gdymc), which are more efficient in terms of time and space complexity with respect to the optimal solution.
	
	\item We provide two additional greedy approximation algorithms faster than the optimal one, but slower than the greedy ones, called \apxmrelong (\apxmre) and \apxmrclong (\apxmrc), which provide a \(\frac{1}{2}(1 - \frac{1}{e})\)-approximation guarantee in terms of the collected reward by exploiting submodularity properties -- with $e$ being the base of the natural logarithm.
	
	\item We evaluate the performance of our algorithms on synthetic and real data, showing that the approximation algorithms collect more than $80\%$ of the optimum reward.
\end{itemize}

This paper extends preliminary results presented 
in~\cite{sorbelli2020optimal} and~\cite{sorbelli2020speeding}. In
particular, the algorithm \optsc discussed herein was 
first introduced in~\cite{sorbelli2020optimal} but is here adapted
to solve a different problem, i.e., \prob.
Its optimality proof is also new and has never appeared before.
Moreover, we provide a much thorough experimental comparison
for all the algorithms on a set of real-world data obtained from a large scale commercial vineyard.

\paragraph{Organization}
The rest of the paper is organized as follows.
Section~\ref{sec:related} reviews the related work on the orienteering problem.
Section~\ref{sec:definition} formally defines the problem and submodularity properties.
Section~\ref{sec:opt-alg} introduces the \optsc algorithm, a polynomial-time  optimal algorithm for \prob, and prove its optimality.
Section~\ref{sec:algorithms} proposes four greedy algorithms in details, while
Section~\ref{sec:approximation} proves a constant factor approximation for the \apxmre and \apxmrc algorithms by exploiting the submodularity property.
Section~\ref{sec:simulations} evaluates the effectiveness of our algorithms on large scale instances through simulation experiments.
Section~\ref{sec:conclusions} offers conclusions.


\section{Related Work}\label{sec:related}
In this section, we provide selected references to previous works on the orienteering problem and submodularity.

\paragraph{Orienteering} 
Orienteering is a classic combinatorial optimization problem defined over graphs with costs associated with edges and rewards associated with vertices.
Introduced in~\cite{golden1987orienteering}, the orienteering problem is also known in the literature as the \emph{bank robber} problem~\cite{awerbuch1998new} and a few other names~\cite{Aghezzaf2016}. 
Orienteering  was shown to be $\mathit{NP}$-hard in~\cite{golden1987orienteering} and $\mathit{APX}$-hard in~\cite{blum2007approximation}.
Numerous variants of the orienteering problem have been proposed in the literature. 
For example, the start vertex may be fixed (rooted orienteering problem) or not (unrooted orienteering problem), edges can be directed or undirected, vertices in the graph may be placed on a plane, or there may be more than one robot collecting rewards. 
The reader is referred to~\cite{gunawan2016orienteering} for a survey of different formulations.
Due to its intrinsic computational complexity, numerous heuristic solutions have been proposed in literature~\cite{gunawan2016orienteering}. 
Exact solutions using branch and bound techniques have been proposed as well~\cite{fischetti1998solving}, but do not scale to problem instances with tens of thousands of vertices like those motivating this work.
A different line of research has aimed at developing approximation algorithms for orienteering, often providing specialized results applicable only to restricted versions of the problem. 
For the rooted version of the problem, the first constant-factor approximation was given in~\cite{Blum2003}, where the authors give a $4$-approximation. 
This result was later improved in~\cite{Chekuri2012}, where for the case of directed graphs the authors give a $\mathcal{O}(2+\varepsilon)$-approximation with a time complexity of $n^{\mathcal{O}(\frac{1}{\varepsilon^2})}$ (where $n$ is the number of vertices in the graph), and for the case of undirected graphs, the authors provide an $\mathcal{O}(\log^2 \opt)$-approximation, where $\opt$ is the number of vertices in an optimal solution.
For the case where the graph is planar and fully connected, a $(1+\varepsilon)$-approximation algorithm was proposed in~\cite{Chen2006}.

\paragraph{Orienteering Applications in Robotics} 
Orienteering has found various applications in robotics. 
These include precision irrigation~\cite{CarpinTASE2020} and robotic harvesting~\cite{Mann2015}. 
The orienteering framework has also been used to study surveillance problems~\cite{ThakurIROS2013}, monitoring~\cite{RusTRO2016}, and information gathering~\cite{Geoff2019}.
Stochastic extensions of the orienteering problem with random edge costs have also been used to study \emph{survival} problems where multiple robots are navigating a dangerous environment and the objective is to visit locations while ensuring a minimal probability of success~\cite{PavoneTOS}.

\paragraph{Orienteering Based on Submodularity}
There exist other works where the orienteering problem is approached observing that the collected rewards may follow the well known {\em submodularity} property.
A function is {\em submodular} if its marginal increment (when a single element is added to the input set) decreases as the size of the input set increases~\cite{nemhauser1978analysis}.

The aforementioned paper~\cite{PavoneTOS} as well as~\cite{jorgensen2017matroid}
studied a multi-robot coordination problem modeled as an orienteering problem  for general graphs leveraging on the submodularity property.
The authors consider a team of robots tasked with visiting sites in a risky environment and subject to team-based operational constraints.
Such a problem is modeled using a graph and a matroid for capturing the team-based operational constraints.
The formulated Matroid Team Surviving Orienteers (MTSO) problem has a submodular structure, and hence polynomial-time algorithms with a guaranteed solution are developed.
In~\cite{xu2020approximation, xu2020approximation2}, approximation algorithms are proposed for the Team Orienteering Problem in the context of mobile charging of rechargeable sensors.
This problem has many potential applications related to the Internet of Things (IoT) and smart cities, such as dispatching energy-constrained mobile chargers to charge as many energy-critical sensors as possible to prolong the network lifetime.
In~\cite{roberts2017submodular}, the submodularity property is used to design a trajectory planning algorithm for an aerial 3D scanning employing a drone.
The authors leveraged submodularity to develop a computationally efficient method for generating scanning trajectories, that reasons jointly about coverage rewards and travel costs.
However, none of the above works addresses the constrained aisle-graph structure.

In~\cite{ghuge2020quasi}, an approximation algorithm is proposed that runs in quasi-polynomial time for the submodular tree orienteering problem.
\revision{Differently from us, they work on directed graphs.
Moreover, their approximation ratio is $\frac{\log k}{\log \log k}$, where $k \le |V|$ is the number of vertices in an optimal solution, while in our work we guarantee a constant approximation ratio of $\frac{1}{2}(1 - \frac{1}{e})$.}
In~\cite{shi2020robust}, an interesting new problem called Robust Multiple-Path Orienteering Problem, in which the main goal is to construct a set of paths for robots guaranteeing robustness in case of malicious attacks, is proposed.
The authors provided an approximation algorithm based on submodularity.

\paragraph{Orienteering on Aisle-graph Family}
The orienteering problem for aisle-graphs was first investigated in~\cite{thayer2018routing} which considered variants on the graph topology and robot capabilities along with experimental results. 
The authors in~\cite{thayer2018routing}, working on aisle-graphs with two accesses (two junction lines, one at each endpoint of the aisles), proposed two greedy heuristics, \gfrlong (\gfr) and \gprlong (\gpr), which respectively select a subset of full or partial rows to be traversed.
At each selection, the robot computes the budget required to collect rewards from its current position and prefers full/partial rows with maximum reward per unit of budget.
The time complexities of \gfr and \gpr are $\mathcal{O}(m^2)$ and $\mathcal{O}(m^2n)$, respectively, where $m$ is the number of aisles and $n$ is the number of vertices composing each aisle.
The authors in~\cite{sorbelli2020optimal} developed polynomial-time algorithms to improve some of the problems faced in~\cite{thayer2018routing}. 
In particular, they designed an optimal algorithm, called \ofrilong (\ofri), that improves on \gfr by determining the optimal solution for the full-row policy, whose time complexity is $\mathcal{O}(m\cdot \max\{n,\log m\})$.
Moreover, they proposed \hgclong (\hgc), which slightly improves on \gpr at a higher time complexity.
In~\cite{thayer2018arouting}, authors studied the problem of routing multiple robots within a vineyard, where movement is limited when a row is entered, for the application of precision irrigation, proposing three algorithms combining \gfr and \gpr either in series or in parallel.

\revision{In a seminal paper in~\cite{ratliff1983order}, aisle-graphs have been investigated for solving the original version of the well known Traveling Salesman Problem (TSP).
The OP is a particular variant of TSP where profits/rewards are introduced.
In other words, the goal of TSP is only to find the minimum cost cycle in the area such that all the points are visited exactly once.
Despite TSP is $\mathit{NP}$-hard~\cite{applegate2006traveling} on general graphs, the authors in~\cite{ratliff1983order} have found an optimal polynomial algorithm (linear in the number of aisles) for aisle-graphs with two accesses based on an efficient dynamic programming approach.
Further, this result has been generalized by the authors in~\cite{cambazard2018fixed} providing an optimal pseudo-polynomial algorithm with time complexity $\mathcal{O}(nh7^h)$ where $h \le n$ denotes the number of horizontal accesses in the area.
}

Finally, in~\cite{bettisorbelli2019automated}, aisle-graphs are used to model a warehouse where an automated picking system is implemented. 
The distance traveled for collecting items belonging to a customer order by a robot, forced to follow the aisles, is compared with the distance traversed by a drone that accomplishes the same goal flying freely inside the warehouse.

\section{Problem Definition}\label{sec:definition}
Consider an undirected {\em aisle-graph} $A(m, n) = (V, E)$, 
where $m$ denotes the number of rows (aisles), $n$ denotes the number of columns (number of vertices composing each aisle), while rows are all connected only via the first column.
Formally, the set of vertices is defined as $V = \{ v_{i,j} | 1 \le i \le m, 1 \le j \le n \}$. 
The set of edges $E$ is defined as follows:
\begin{itemize}
	\item Each vertex $v_{i,j}$ with $1 \le i \le m$ and $1 < j < n$ has two edges, 
	one toward $v_{i,j-1}$ and the other toward $v_{i,j+1}$;
	\item each vertex $v_{i,1}$ with $1 < i < m$ has three edges: one toward $v_{i-1,1}$, 
	one toward $v_{i+1, 1}$, and one toward $v_{i,2}$.
	Accordingly, edges connected to the corner vertices $v_{1,1}$ are $v_{m,1}$ are well defined.
\end{itemize}

\begin{problem}[{\bf \problong (\prob)}]
	Let $A(m, n)$ be an aisle-graph; $v_{1, 1} \in V$ be the home vertex; $\mathcal{R} : V \rightarrow \R_{\ge 0}$ be a reward function where $\mathcal{R}(v_{i,j})=0$ if $j=1$; let $\alpha$ be a positive constant cost (in terms of the budget required) to traverse any edge; \revision{let $B>0$ be the budget, i.e., the maximum distance that can be traveled along edges.}
	The \prob aims to find a cycle of maximum reward starting at $v_{1, 1}$
	of cost no greater than $B$.
\end{problem}

Without loss of generality, we assume $\alpha=1$. 
Figure~\ref{fig:graph-sc} shows an aisle-graph $A(4,5)$. 
The vertices in the first column (for $j=1$) do not provide any reward, and they are used simply to connect the $m$ rows.
In what follows, by $r_i$ and $c_j$ we denote the $i$-th row and the $j$-th column of $A$, respectively,
and by $ R = \{ r_1, \ldots, r_{m} \}$ and $ C = \{ c_1, \ldots, c_{n} \}$ the set of rows and columns of $A$, respectively.

\begin{figure}[ht]
	\centering
 	\includegraphics[scale=0.90]{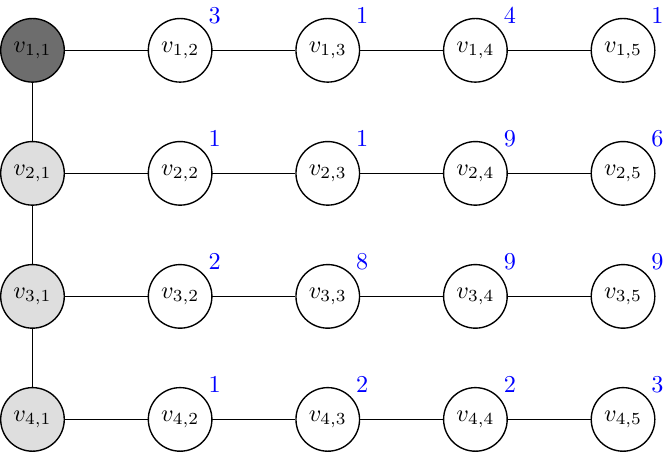}	
	\caption{Aisle-graph $A(4, 5)$; the interconnecting vertices are in dark, the home vertex is the grayest, \revision{the edges costs are unitary}, and the rewards $\mathcal{R}(v_{i,j})$ are in blue (note that interconnecting vertices have no reward).}
	\label{fig:graph-sc}
\end{figure}

\subsection{Submodularity}
In this section, we revise the fundamental definition of submodularity that will be used to derive some of our results.

\begin{definition}[Submodularity~\cite{nemhauser1978analysis}]
    Given a finite set \(V = \{v_1, \ldots, v_n \}\), a set function \(f : 2^V \rightarrow \mathbb{R}\) is submodular if for any \(X \subseteq Y \subseteq V\) and \(v \in V\setminus Y\),
    \begin{equation}\label{eq:sumbodularity def 1}
        f (X \cup v) - f (X) \ge f (Y \cup v) - f (Y)
    \end{equation}
    or equivalently, for each $X, Y \subseteq V$
    \begin{equation}\label{eq:sumbodularity def 2}
        f(X) + f(Y) \ge f(X \cup Y) + f(X \cap Y)
    \end{equation}
\end{definition}

A well-studied class of problems aims to select a \(k\)-element subset that maximizes a monotone submodular objective \(f\) classified with respect to the cost function.
The simplest subset selection problem with cardinality constraint \(|X| \le k\) is $\mathit{NP}$-hard~\cite{cornnejols1977location}. 
The greedy algorithm, which iteratively selects one element with the largest marginal gain, 
can achieve \((1 - \frac{1}{e})\)-approximation guarantee~\cite{nemhauser1978analysis}.  
The best known guarantees are \(\frac{1}{\kappa}(1 - e^{-\kappa})\) due to~\cite{conforti1984submodular}, and
(\(1 - \frac{\kappa}{e}\)) due to~\cite{sviridenko2013tight} for a cardinality constraint,
where \(0 \le \kappa \le 1\) is called {\em curvature}~\cite{conforti1984submodular} that substantially  characterizes how close a monotone submodular function \(f\) is to the modularity.
A generic function \(f\) is {\em modular}
if and only if \(\kappa = 0\).
A function $f: 2^N \rightarrow \mathbb{R}$  is monotone {\em modular} (also called ``additive'' or ``linear'') if \(f(X) = \sum_{j \in X}{w(j)}\) for some $w: N \rightarrow \mathbb{R}_{\ge 0}$.

For the class of problems aiming to select a subset that maximizes a monotone submodular objective \(f\) 
under the cost constraint \(\mathcal{C}(X) \le B\), where \(\mathcal{C}\) is a linear function, the greedy rule selecting the element with the largest marginal gain on \( f\) leads to an unbounded approximation ratio~\cite{khuller1999budgeted}. 
Instead, the generalized greedy rule that iteratively selects the element with the largest ratio of the marginal 
gain on \(f\) and \(\mathcal{C}\), and outputs the better of the found subset and the best single element achieves \(\frac{1}{2}(1 - \frac{1}{e})\)-approximation guarantee~\cite{krause2005note}. 

\subsection{Reward and Cost Modularity}
\prob aims to select a subset $S$ of vertices $V$ in $A$ 
that maximizes the reward under a constraint on the traveling cost to reach $X$. 
To formalize \prob, recall that the orienteering problem with a budget constraint can be stated as:
\begin{equation}\label{eq:submodular-orienteering-subset}
S = \argmax_{X \subseteq V} {\{ \mathcal{R}(X)\; |\; \mathcal{C}(X) \le B \}}
\end{equation}
where $\mathcal{R}$ is the modular monotone reward function to maximize, $\mathcal{C}$ is the modular monotone cost function subject to the constraint to be limited by $B$,  the  budget given as input.

Due to the specific structure of the aisle-graph $A$, 
given a subset of vertices $X$, the cycle of minimum cost that visits all the vertices in $X$ 
traverses the 
subgraph $T_X$ of $A$ containing $v_s=v_{1,1}$ that consists of the vertices in $X$ along with the minimum set of vertices that make it a connected subgraph of $A$ (see, e.g., Figure~\ref{fig:TX}). Such a subgraph is uniquely defined because there is only one simple path from any two vertices of $A$, i.e., $T_X$ is a tree.
\begin{figure}[htbp]
    \centering
	\includegraphics[scale=1.4]{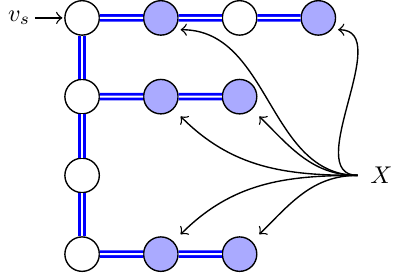}
    \caption{
	The selected subset of vertices $X \subseteq V$ (in blue) and the 
	subgraph $T_X$ that connects $X$ (whole subgraph rooted in $v_s$ with double edges in blue).}
	\label{fig:TX}
\end{figure}

Given $X$, the cycle of minimum cost to visit $X$ starts from $v_{1,1}$, visits the vertices in $T_X$ scanning the rows of $A$ in order from the top to the bottom.
Each row is visited back-and-forth up to its furthest vertex in $X$. 
After visiting the last row \imax that has vertices in $X$, the cycle is completed by vertically traversing the first column from $v_{i_{\max},1}$ to $v_{1,1}$.
The cost of such a cycle is given by the sum of the costs of each traversed edge multiplied by the number of times it is traversed, that is, $\mathcal{C}(X)=2(|T_X|-1)$.
\begin{figure}[htbp]
	\centering
	\subfloat[$\mathcal{C}(X \cup v_{5,4})$.]{%
		\includegraphics[scale=1.4]{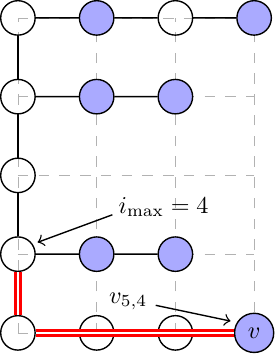}
		\label{fig:tree-a}
	}
	\hfill
	\subfloat[$\mathcal{C}(X \cup v_{3,4})$.]{%
		\includegraphics[scale=1.4]{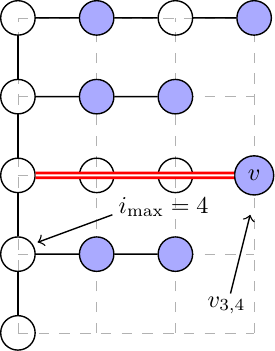}
		\label{fig:tree-b}
	}
	\hfill
	\subfloat[$\mathcal{C}(X \cup v_{4,4})$.]{%
		\includegraphics[scale=1.4]{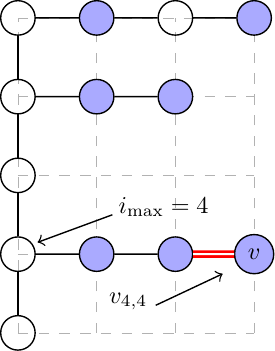}
		\label{fig:tree-c}
	}
	\caption{Marginal cost $\mathcal{C}(X \cup v_{i,j})-\mathcal{C}(X)$  (in red)  when the vertex $v=v_{i,j}$ is added and $i_{\max}=4$ \revision{with unitary costs}:
		 (a) $i > i_{\max}$ \revision{with marginal cost $8$};
		 (b) $i < i_{\max}$ \revision{with marginal cost $6$}, no vertex of row $i$ belongs to $S$;
		 (c) $i \le i_{\max}$ \revision{with marginal cost $2$}.}
	\label{fig:cases}
	
\end{figure}

Consider for instance Figure~\ref{fig:TX}. 
The cycle $\mathcal{C}(X)$ traverses, in the order, row $1$ back-and-forth up to vertex $v_{1,4}$, row $2$ back-and-forth up to vertex $v_{2,3}$, vertex $v_{3,1}$ in row $3$, and row $4$ back-and-forth up to vertex $v_{4,3}$.
Finally, it returns to $v_{1,1}$ traversing back column $1$.
The cost of $X$ in Figure~\ref{fig:TX} is $2(|T_X|-1)=20$ (precisely, $3 \cdot 2 + 1 + 2 \cdot 2 + 2 + 2 \cdot 2 + 3$).
Since $T_X$ is uniquely defined given $X$, for convenience, we define, in Eq.~\eqref{eq:submodular-orienteering-subset}, $\mathcal{C}(X)$ as the cost of visiting the tree $T_X$, and the function $\mathcal{R}(X)$ as the sum of the rewards of all the vertices $v \in X$.
Note that $\mathcal{R}(T_X) \ge \mathcal{R}(X)$ because $X \subseteq T_X$.

It is very important to note that the marginal cost of $\mathcal{R}$ and $\mathcal{C}$, i.e., the cost for updating both $\mathcal{R}$ and $\mathcal{C}$ after the set $X$ is increased by one vertex, can be computed in constant time.
This is trivial for $\mathcal{R}$, indeed $\mathcal{R}(X \cup v)=\mathcal{R}(X)+\mathcal{R}(v)$.
The increment of the cost of the $\mathcal{C}(X)$ when a new vertex is added depends on the position of $v$ with respect to $X$. 
The possible scenarios are illustrated, for example, in Figure~\ref{fig:cases} and the increment is depicted in red.
Figure~\ref{fig:cases} illustrates how the cost changes depending on the relative position of $v=v_{i,j}$ with respect to the last line in the solution set $S$.
It is easy to compute the increment in constant time.
Specifically, in Figure~\ref{fig:tree-a} (case $i > i_{\max}$), the increment of cost considers the vertical movements for reaching the new $i$-th row and the horizontal movements for visiting vertices up to $v$;
in Figure~\ref{fig:tree-b} (case $i < i_{\max}$ with no vertex of row $i$ belongs to $S$), the increment of cost only considers the full horizontal movements for visiting vertices up to $v$;
in Figure~\ref{fig:tree-c} (case $i \le i_{\max}$), the increment of cost only considers the partial horizontal movements for completing the visit of the $i$-th row up to $v$.
From the above considerations, Eq.~\eqref{eq:sumbodularity def 2} is satisfied as equality from both  $\mathcal{R}$ and $\mathcal{C}$, and thus such functions are modular.


\section{Optimal Solution for \prob}\label{sec:opt-alg}
We devise a dynamic programming algorithm, called \optsclong (\optsc), that optimally solves \prob in polynomial time.

During the initialization, two tables $T$ and $R$ of size $m \times n$ and $m \times (\frac{B}{2}+1)$, respectively, are created.
Table $R$ has columns $j=0, 1, \ldots, \frac{B}{2}$.
Instead, Table $T$ is initialized as follows: $T[i,j] = \sum_{k=1}^{j} \mathcal{R}(v_{i,k})$ for each $i,j$ which represents, fixed a row $i$, the cumulative reward up to the $j$-th column starting from the leftmost side, with $1 \le j \le n$. Clearly, $T[i,1]=0$.
\revision{The initialization of table $T$ has time and space cost $\mathcal{O}(mn)$ since its size is $m \times n$ and the cells of each row can be filled in constant time from left to write:
$T[i,j] = T[i,j-1]+ \mathcal{R}(v_{i,j})$.}
The entry $R[i,b]$ records the largest reward that can be attained with budget $2b$, where $0\leq b\leq \frac B 2$, visiting only the first $i$ rows.
Finally, let $Q[i,b]$ be the last column visited in row $i$ to obtain the reward $R[i,b]$.

Some general properties will be exploited by our dynamic programming solution:
\begin{itemize}
    \item If $B\geq 2nm+2(m-1)$ then the robot can visit the whole graph.
    \item Any optimal solution  traverses the selected rows either in increasing or decreasing order with respect to their row indices. 
    \item Any optimal solution visits each row at most once. 
    \item For each row, the algorithm has to select the last vertex to visit. 
    If no vertex is visited, still $v_{i,1}$ has to be traversed to reach the subsequent row $i+1$.
\end{itemize}

The optimal solution that visits the first $j$ vertices in row $i$, with $j \ge 1$, gains $T[i,j]$ reward from row $i$, and spends $2j$ budget to serve rows $i$ starting from row $i-1$.
Namely, we reserve $2$ units of budget to change row and $2(j-1)$ units to traverse row $i$.
We thus have the next recurrence.
The first row of table $R$ is defined as follows:
\begin{equation*}
    R[1,b] = 
        \begin{cases}
             T[1, b+1] & 0 \le b \le n-1\\
             T[1, n] & n \le b \le \frac{B}{2}
        \end{cases}
\end{equation*}
Note that for $b=0$, $T[1,0]=\mathcal{R}(v_{1,1})=0$.
Then, for each subsequent row $R[i,b]$, $i \ge 2$:
\begin{equation}\label{eq:dp-recurrence}
    R[i,\!b]\!=\!\!
        \begin{cases}
            -\infty &  \!\!\!\!b\!<\!i\!-\!1 \\
            0 & \!\!\!\!b\!=\!i\!-\!1\\
                            \max\limits_{1 \le j \le \min\{b-i+2, n-1\}} \!\!\{\!R[i\!-\!1, b\!-\!j\!]\!+\!T[i, j]\!\} & \!\!\!\!b\!>\!i\!-\!1 
        \end{cases}
\end{equation}

Note that $j \le \min\{b - i+1, n-1\}$ is obtained by observing that $j \le n-1$, $b - (j - 1)-1 \ge 0$ and more strictly $b-(j-1)-1 \ge i-2$ to have enough budget to reach row $i-1$. 
If $b=i-1$ and $i \ge 2$, $R[i,b]$ is  $0$ because there is only enough budget to reach row $i$ traversing column $1$.

Table $Q$ is then filled recalling for each position the column that has given the maximum reward, i.e., $Q[i,b] \gets j = \argmax R[i,b]$.
Finally, the reward of the optimal solution with budget $\frac{B}{2}$ is found by calculating $\max_{1 \le i \le m} R[i, \frac{B}{2}]$ because the furthest row reached by the optimal solution, is not known in advance.
Having fixed the last row, the solution is computed in $\mathcal{O}(m)$ tracing back the choices using the table $Q$.
Note that with a single execution of \optsc, the optimal reward for any value $b$, with $0 \le b \le \frac{B}{2}$, is computed. 
That is, the maximum reward for budget $b$ is the maximum in column $b$ of table $R$.

The algorithm runs in time $\mathcal{O}(m  n  \frac{B}{2})$ plus $\mathcal{O}(m)$ to retrieve the solution, and takes $\mathcal{O}(m  n + m  \frac{B}{2})$ space.
Since the maximum budget is $B\geq 2nm+2(m-1)$, i.e., $B$ is upper bounded by $\mathcal{O}(mn)$, then algorithm \optsc is strictly polynomial in the size of the input.

By the above discussion, the next theorem can be stated.

\begin{theorem}\label{th:bound}
Algorithm \optsc optimally solves \prob in time $\mathcal{O}(m  n  \frac{B}{2})$.
\end{theorem}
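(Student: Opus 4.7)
The plan is to prove optimality by an induction on the row index together with a clean semantic interpretation of the DP table, and then to read off the running time directly from the size of the table and the cost of filling each cell. I would begin by nailing down the structural characterization of optimum solutions, most of which is already set up in Section~\ref{sec:definition}. Because $T_X$ is a tree, any cycle that collects exactly the vertices of $X$ must traverse every edge of $T_X$ at least twice, and the canonical back-and-forth tour achieves the bound $2(|T_X|-1)$ by visiting the used rows in monotonically increasing index order, each row as a back-and-forth along a prefix $v_{i,1},\ldots,v_{i,\ell_i}$. An optimum solution is therefore completely specified by a vector $(\ell_1,\ldots,\ell_m)$ of prefix lengths (with $\ell_k=0$ meaning that row $k$ is not entered except possibly as a descent corridor), together with the index $i_{\max}$ of the deepest row reached; its reward and its half-budget cost are then the sums already computed in Section~\ref{sec:opt-alg}.

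I would then fix the semantics of the DP by declaring $R[i,b]$ to be the maximum reward over all feasible cycles whose deepest visited row is exactly $i$ and whose half-budget cost is exactly $b$ (and $-\infty$ if no such cycle is feasible). Under this interpretation, the variable $j$ in the recurrence is the half-budget allocated to row $i$: one unit to descend from $v_{i-1,1}$ to $v_{i,1}$ plus $\ell_i-1$ units of horizontal back-and-forth, so $j=\max(\ell_i,1)$ and the reward contributed by row $i$ is $T[i,j]$. In particular, $j=1$ (with $T[i,1]=0$) models a row used purely as a descent corridor to deeper rows. With this semantics in place, the base case $R[1,b]=T[1,\min(b+1,n)]$ is immediate since no descent is involved. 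For the inductive step on $i\ge 2$, any cycle with $i_{\max}=i$ decomposes uniquely into a sub-cycle on rows $1,\ldots,i-1$ (with its own deepest visited row $i'\le i-1$, inductively captured by $R[i-1,\cdot]$) and a prefix choice in row $i$; since both reward and cost split additively across this decomposition, the maximum is exactly what the recurrence computes. The global optimum for a budget $B$ is then $\max_{1\le i\le m} R[i,B/2]$, enumerating over the deepest row of the optimum.

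The most delicate point, and the one I would flag as the main obstacle, is verifying that the recurrence correctly handles solutions that skip some intermediate rows in $\{1,\ldots,i-1\}$. The $j=1$ branch at each level is precisely what charges $1$ half-budget unit and $0$ reward for a purely pass-through row, so that every feasible configuration $(\ell_1,\ldots,\ell_{i_{\max}})$ appears in the chained recurrence and not just the ``dense'' ones in which every intermediate row is actually collected from. Once this is clarified, the time analysis is routine: table $T$ is filled in $\mathcal{O}(mn)$; there are $\mathcal{O}(m\cdot B/2)$ entries of $R$, each computed by a maximum over at most $n$ candidate values of $j$; and reconstructing the witnessing tour via $Q$ costs $\mathcal{O}(m)$. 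Summing yields the claimed $\mathcal{O}(mn\,B/2)$ bound.
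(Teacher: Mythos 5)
Your proof is correct and follows essentially the same route as the paper's: the same structural characterization of optimal tours (monotone row order, back-and-forth prefixes, cost $2(|T_X|-1)$), the same row-by-row induction on the deepest visited row with the $j=1$ branch absorbing pass-through rows, and the same final maximization $\max_{1\le i\le m} R[i,B/2]$. The only cosmetic differences are that you phrase the inductive step as a direct additive decomposition where the paper argues by contradiction, and that your declared ``cost exactly $2b$'' semantics is silently converted to ``cost at most $2b$'' by the padded base case $R[1,b]=T[1,\min(b+1,n)]$ --- the same looseness present in the paper --- which is what justifies reading the answer off column $B/2$ alone.
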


\begin{proof}
The running time is obvious. 
Let us define $ R(i,b) $ as the optimal profit to reach any vertex of row $i$ when the budget is $2 b$.
The solution $Q(i,b)$ associated with $R(i,b)$ has maximum profit and must traverse the vertices $v_{1,1}, \ldots, v_{i,1}$ of $c_1$, i.e., the first column with no rewards.

For the correctness, we use induction.
For the base case, $i=1$ and any $b$, or any row $i$ and $b \le i - 1$, the correctness follows from the above discussion.

Inductive Step: When computing $ R[i,b] $ by the induction hypothesis, we have that $R[i-1, b-j] $ for any $1 \le j \le \min\{ b-i+2, n-1 \}$ are already computed correctly.
Since any optimal solution $Q[i,b]$ visits the rows in increasing index order, traverses only once row $i$, up to any $c_j$ (recall that $j=1$ means that the row is not visited), and must traverse the vertices $v_{1,1}, \ldots, v_{i,1}$ of $c_1$, $Q[i,b]$ is built starting from a sub-problem that traverses the vertices $v_{1,1}, \ldots, v_{i-1,1}$ and some vertices of row $i$. Then, $Q[i,b]$ is based on a sub-problem that considers up to row $i-1$, leaves $1$ unit of budget to reach row $i$, and leaves $(j-1)$ units of budget to reach vertex $j$ in row $i$, for some $1 \le j \le b-i$.
Hence, the value of $R[i,b]$ in Eq.~\eqref{eq:dp-recurrence} is correct.

Now, assume by contradiction, that there exists a solution $Q'[i,b] \not =  Q[i,b] $  with cost $R'[i, b] > R[i, b]$, and that is the first time that Eq.~\eqref{eq:dp-recurrence} does not provide the optimum.
Let vertex $v_{i,j}$ be the vertex reached by $Q'[i,b]$ in row $i$. 
The solution $Q'[i,b]$ gains $T[i,j]$, and $R'[i,b]-T[i,j]$ corresponds to the profit of a sub problem $Q[i-1,b-j]$ for which $R[i-1, b-j]$ is optimal.
That is, $R'[i,b]-T[i,j]=R[i-1, b-j]$ and thus $R[i,b]=R[i-1, b-j]+T[i,j]=R'[i,b]$.

Finally, since we do not know in advance the furthest row that belongs to the optimal solution, the optimal solution with budget $B$ is found in column $\frac B 2$, and precisely it is $\max_{1 \le i \le m} R[i, \frac B 2]$.
\end{proof}

\paragraph{Example}
\revision{For the example in Figure~\ref{fig:graph-sc}, assuming a budget $B=16$, the Table $R$ is iteratively created from its first row.

\begin{table}[htbp]
	\renewcommand{\arraystretch}{1.25}
	\centering
	\begin{tabular}{l|ccccccccc}
        $R$ & 0 & 1 & 2 & 3 & 4 & 5 & 6 & 7 & 8 \\
        \hline
        1 & 0 & 3 & 4 & 8 & 9 & 9 & 9 & 9 & 9 \\
	\end{tabular}
\end{table}
Here, we assume to visit only the first row.
Specifically, $R[1,0]=0$ because with no budget there is no reward; $R[1,1]=3$ because with a budget of $2\cdot1 = 2$, the maximum obtainable reward is $3$; $R[1,2]=4$ because with a larger budget $2\cdot2 = 4$, it is possible to visit the third column, and hence the cumulative reward is $3+1=4$; $R[1,3]=8$ because spending a budget of $2\cdot3 = 6$, the cumulative reward is $3+1+4=8$, and so on.
Clearly, $R[1,4] = \ldots = R[1,8] = 9$ because the available budget is sufficiently enough for completely visiting the first row as a whole.
The second row of table $R$ is the following one.

\begin{table}[htbp]
	\renewcommand{\arraystretch}{1.25}
	\centering
	\begin{tabular}{l|ccccccccc}
        $R$ & 0 & 1 & 2 & 3 & 4 & 5 & 6 & 7 & 8 \\
        \hline
        2 & $-\infty$ & 0 & 3 & 4 & 11 & 17 & 20 & 21 & 25 \\
	\end{tabular}
\end{table}
Assuming to visit also the second row, $R[2,0]=-\infty$ because, with no budget, there is no chance for computing a cycle that visits any vertex in the second row and going back to $v_{1,1}$.
$R[2,1]=0$ because with a budget of $2$, it is possible only to visit $v_{2,1}$ and go back, without any reward since the elements on the first column do not provide reward.
The first interesting value is $R[2,4]=11$, because with budget $8$, forcing to visit the second row, it is possible to arrive up to $v_{2,4}$ which gives a cumulative reward of $1+1+9=11$, which is the maximum possible.
Then, $R[2,7]=21$ because with budget $14$, visiting $v_{1,2}$ with cumulative reward of $3+1=4$, and visiting $v_{2,5}$ with cumulative reward $1+1+9+6=17$, the total reward is $4+17=21$.

The same strategy, applying Eq.~\eqref{eq:dp-recurrence}, permits to build the final table $R$.
\begin{table}[htbp]
	\renewcommand{\arraystretch}{1.25}
	\centering
	\begin{tabular}{l|ccccccccc}
        $R$ & 0 & 1 & 2 & 3 & 4 & 5 & 6 & 7 & 8 \\
        \hline
        1 & 0 & 3 & 4 & 8 & 9 & 9 & 9 & 9 & 9 \\
        2 & $-\infty$ & 0 & 3 & 4 & 11 & 17 & 20 & 21 & 25 \\
        3 & $-\infty$ & $-\infty$ & 0 & 3 & 10 & 19 & 28 & 31 & 32 \\
        4 & $-\infty$ & $-\infty$ & $-\infty$ & 0 & 3 & 10 & 19 & 28 & 31
	\end{tabular}
\end{table}

The maximum achievable reward assuming to visit only the first row is $9$, i.e., full visit of the row because there is available budget.
In the case of considering also the second row, it is not possible to do two full visits for both the first and second rows, since the cost will be larger than the budget, hence the maximum achievable reward is $25$.
The same could be done up to the third row, i.e., maximum budget $32$, and finally up to the fourth, i.e., $31$.
It is easy to see that the optimal solution visits only the first three rows obtaining a total reward of $32$.

The optimal solution is extracted from table $Q$, which is:
\begin{table}[htbp]
	\renewcommand{\arraystretch}{1.25}
	\centering
	\begin{tabular}{l|ccccccccc}
        $Q$ & 0 & 1 & 2 & 3 & 4 & 5 & 6 & 7 & 8 \\
        \hline
        1 & 1 & 2 & 3 & 4 & 5 & 5 & 5 & 5 & 5 \\
        2 & $-\infty$ & 1 & 1 & 1 & 4 & 5 & 5 & 5 & 5 \\
        3 & $-\infty$ & $-\infty$ & 1 & 1 & 3 & 4 & 5 & 5 & 5 \\
        4 & $-\infty$ & $-\infty$ & $-\infty$ & 1 & 1 & 1 & 1 & 1 & 1
	\end{tabular}
\end{table}

The solution is extracted from the index in $Q$ that has the maximum reward in $R$, i.e., $Q[3,8] = 5$, so $v_{3, 5}$ is visited.
Then, the farthest column to be visited from the previous row is $Q[3-1,8-Q[3,8]] = Q[2,3] = 1$, i.e., up to $v_{2,1}$.
Finally, the last column of row $1$ to be visited is $Q[2-1,3-Q[2,3]] = Q[1, 2]$, i.e., up to $v_{1,2}$.
In conclusion, the optimal solution is: $S = \{  v_{1,1},v_{1,2}, v_{1,3}, v_{2,1}, v_{3,1}, v_{3, 2}, v_{3, 3}, v_{3, 4}, v_{3, 5}\}$.
}


\section{Proposed Greedy Algorithms}\label{sec:algorithms}

In this section, we propose two basic greedy algorithms, called \gdymelong (\gdyme) and \gdymclong (\gdymc), for \prob; 
and two additional greedy approximation algorithms, called \apxmrelong (\apxmre), and \apxmrclong (\apxmrc).
We also compare the time and space complexity of these algorithms with the optimal algorithm, \optsclong (\optsc).

\subsection{The \gdyme Algorithm}
The \gdymelong (\gdyme) algorithm is a simple greedy algorithm for \prob.
The main idea behind it is to visit the vertices in decreasing order of their rewards, i.e., vertex $v_{i,j}$ with current largest reward is either added to the solution set $S$, if it is reachable with the residual budget or discarded.
In addition to the vertex $v_{i,j}$ selected by the greedy strategy, at each step, the robot collects on row $i$ all the available rewards from $v_{i,1}, \ldots, v_{i,j}$. 
This is a simplification of the fact that at some point all vertices passed through to reach $v_{i,j}$ would be chosen by \gdyme as they require no budget to be reached.
Hence, selected the vertex $v_{i,j}$, the current solution set $S$ adds $v_{i,j}$ along with all the vertices $v_{i,k}$ with $1 \le k < j$ of the $i$-th row.
Recall that the reward of a vertex is collected only once even if the robot passes through it several times.

\begin{algorithm}[ht]
	$S \gets v_{1,1}$\;\label{code:1-set-params}
	$R \gets \textsc{buildMHeap}(\mathcal{R}(v_{i,j})), 1 \le i \le m, 1 \le j \le n$\;\label{code:1-sort}
	\While{$ \mathcal{C}(S) \le B $} {\label{code:1-loop}
		$v_{i,j} \gets \textsc{extractArgMax}(R)$\;\label{code:1-max}
		\If{$ \mathcal{C}(S \cup v_{i,j}) \le B$} {\label{code:1-check}
			$S \gets S \cup v_{i,k}, {1 \le k \le j}$\;\label{code:1-updates}
			$\textsc{extractHeap}(v_{i,k}), 1 \le k \le j$\;
		}
	}
	\Return{$\mathcal{R}(S)$}\label{code:1-return}
	\caption{\gdyme\(( \text{Graph } A, \text{Budget } B)\)}
	\label{alg:gdyme}
\end{algorithm}

Algorithm~\ref{alg:gdyme} first initializes $S$ with the home vertex $v_{1,1}$, whose reward is null by definition (Line~\ref{code:1-set-params}) and subsequently creates a max-heap $R$ with rewards of all the vertices in $V$ (Line~\ref{code:1-sort}).
This preprocessing takes $\mathcal{O}(m n)$ time.

Next, the main loop (Line~\ref{code:1-loop}) iteratively selects and extracts from \revision{the heap structure $R$} the root $v_{i,j} \in V$ (Line~\ref{code:1-max}), and verifies whether $v_{i,j}$ is reachable with respect to the available budget or not (Line~\ref{code:1-check}).
Recalling that the minimum path to visit a set of vertices of the aisle-graph $A$ is unique, 
the cost $\mathcal{C}(S \cup v_{i,j})$
can be computed in constant time incrementally from $\mathcal{C}(S)$ (see Figure~\ref{fig:cases}).

As already pointed out, if adding vertex $v_{i,j}$ is feasible (from Line~\ref{code:1-updates}), 
all the vertices $v_{i,k}, 1 \le k \le j$, will be added to $S$.
Namely, adding these vertices does not require extra budget because they are on the path to $v_{i,j}$. 
At the end of the main loop, 
the total reward $\mathcal{R}(S)$ is returned (Line~\ref{code:1-return}) in $\mathcal{O}(|S|)$ time.

The main loop (Line~\ref{code:1-loop}) is executed at most $\frac{B}{2}$ times
because the addition of a new vertex to $S$ requires at least $2$ units of budget.
Since each item inserted in $S$ is extracted from $R$,
the algorithm spends $\mathcal{O}(|S|\log(mn))$ time to handle $R$.
Clearly $|S|$ is upper bounded by $B$. Moreover $B$ can be obviously considered as upper bounded 
by $\mathcal{O}(mn)$, since the minimum budget that allows to visit the whole input aisle-graph is $2(mn+m)$.
Any larger budget would lead to have a trivial solution, i.e., a full visit. 
Hence, the total time complexity of \gdyme is 
$\mathcal{O}(m n + B + |S|\log(mn)) = \mathcal{O}(mn\log(mn))$; and its space complexity is $\mathcal{O}(mn)$.

\paragraph{Example}
For the example in Figure~\ref{fig:graph-sc} and a budget $B=16$, the first element greedly added to the solution set $S$ is $v_{2,4}$ with value $9$.
Consequently, all the other elements from the $2$-th row are put in $S$.
From $v_{2,4}$ there is enough budget for visiting other vertices, and in fact the next available vertex the with largest reward is $v_{3,4}$, with reward $9$.
Now, selecting $v_{3,4}$, there is no sufficient budget for visiting other vertices.
Finally, the solution is $S=\{ v_{1,1}, v_{2,1}, v_{2,2}, v_{2,3}, v_{2,4}, v_{3,1}, v_{3,2}, v_{3,3}, v_{3,4} \}$ with total reward $30$.

\subsection{The \gdymc Algorithm}
The \gdymclong (\gdymc) algorithm keeps the same greedy strategy as \gdyme, but selects the vertex with the maximum cumulative reward.
In the preprocessing step,  for each vertex $v_{i,j}$, the algorithm initially sums up in a matrix $R$ the cumulative reward $R[i,j]=\sum_{k=1}^{j}\mathcal{R}(v_{i,j})$, i.e., the sum of the rewards of all the vertices belonging to the $i$-th row up to the $j$-th column.
There is only one  vertex with maximum cumulative reward to be selected in each row, and at the beginning, this vertex is $v_{i,n}$ because the cumulative reward is a monotone function.
During the execution of the algorithm, the candidate vertices, one per each row, are stored in a max-heap.

\begin{algorithm}[ht]
	$S \gets v_{1,1}$\;\label{code:2-set-params}
	$R[i,j] \gets \sum_{k=1}^{j}{\mathcal{R}(v_{i,k})}, 1 \le i \le m, 1 \le j \le n $\;\label{code:2-cumulative}
	$M \gets \textsc{buildHeap}(R[i,n]), 1 \le i \le m$\;\label{code:2-heap}
	\While{$ \mathcal{C}(S) \le B $} {\label{code:2-loop}
		$v_{i,j} \gets \textsc{extractArgMax}(M)$\;\label{code:2-max}
		\If{$ \mathcal{C}(S v_{1,1}\cup v_{i,j}) \le B$} {\label{code:2-check}
		    $S \gets S \cup_{1 \le k \le j} v_{i,k}$\label{code:2-updates}
		} \Else{\label{code:2-else}
			$\textsc{insertHeap}(M, R[i,j-1])$\label{code:2-add}
		}
	}
	\Return{$\mathcal{R}(S)$}\label{code:2-return}
	\caption{\gdymc $(\text{Graph } $A$, \text{Budget } B)$}
	\label{alg:gdymc}
\end{algorithm}

Algorithm~\ref{alg:gdymc} sets the initial solution $S=\{v_{1,1}\}$ (Line~\ref{code:2-set-params}), and computes the cumulative rewards row by row (Line~\ref{code:2-cumulative}).
Then, a max-heap $M$ is built (Line~\ref{code:2-heap}) using  the 
largest cumulative rewards of each row, i.e., $R[i,n]$, for $1 \le i \le m$.
This preprocessing phase takes $\mathcal{O}(m n)$ in time and space.

The main greedy loop iterates until the solution exhausts the budget for the robot (Line~\ref{code:2-loop}). 
At each step, the vertex $v_{i,j} \in V$ with the largest value $R[i,j]$ in $M$ (Line~\ref{code:2-max}) is selected.
If there is enough budget to visit such a vertex, $v_{i,j}$ is added to the current solution $S$ along with all the vertices on its left side (Line~\ref{code:2-updates}) on row $i$.
Note that no other vertices in row $i$ can be inserted later and the size of $M$ is decreased by one. If the budget condition (Line~\ref{code:2-check}) is not satisfied, it implies that there is not enough budget to include $v_{i,j}$ in $S$, and hence $v_{i,j}$ is discarded.
In this case, notice that not even the elements on the right of $j$ can  be reached.
Thus, the remaining largest cumulative reward $R[i,j-1]$ in row $i$, which is associated with vertex $v_{i,j-1}$ on the left of the vertex just discarded, is inserted into $M$ (Line~\ref{code:2-add}).

When the budget is exhausted, the total reward $\mathcal{R}(S)$ is returned (Line~\ref{code:2-return}).

The main loop (Line~\ref{code:1-loop}) is executed at most $\frac{B}{2}$ times.
The cost for extracting the maximum element of a heap or inserting a new element into a heap of size $m$ is $\mathcal{O}(\log(m))$, and the time for updating $S$ and computing $\mathcal{R}(S)$ is bounded by the size of $S$ which is $\mathcal{O}(B)$. Thus, the total time complexity is $\mathcal{O}(m n + B\log(m))$ and space complexity is $\mathcal{O}(mn+m)$.

\paragraph{Example}
For the example in Figure~\ref{fig:graph-sc} and a budget $B=16$, the vertex with the largest cumulative reward within the available budget is $v_{3, 5}$ with an overall cumulative $28$.
Then, the next vertex is $v_{1, 3}$ with overall value of $4$.
After that, no any other vertex can be chosen and the total obtained reward is $32$ and $S = \{ v_{1,1}, v_{1, 2}, v_{1, 3}, v_{3,1}, v_{3, 2}, v_{3, 3}, v_{3, 4}, v_{3, 5} \}$. In this case, \gdymc returns the optimal reward.

\subsection{The \apxmre Algorithm}
The \apxmrelong (\apxmre) algorithm selects, at each step, the feasible vertex that maximizes the ratio between its reward and the cost (budget) to add the selected vertex to the current greedy solution. 
In addition to the greedy solution, \apxmre also builds a second solution that contains the item with the maximum reward reachable with the budget $B$. 
The algorithm returns the maximum between the two solutions.
We will prove in Section~\ref{sec:approximation} that \apxmre guarantees an approximation ratio of $\frac{1}{2}(1-\frac{1}{e})$.

\begin{algorithm}[ht]
	$S_1 \gets \emptyset, r_{\max} \gets -\infty, S_2 \gets v_{1,1}$\;\label{code:3-set-params}
	$H \gets [i,d][j], 1 \le i \le m, 0 \le d \le i-1, 1 \le j \le n$\;\label{code:3-matrix}
	\For{$ i \gets 1, m $} {\label{code:3-preprocessing}
		\For{$ d \gets 0, i-1 $} {
			\For{$ j \gets 1, n $} {
				$H[i, d][j] \gets \frac{\mathcal{R}(v_{i,j})}{2(j-1) + 2d}$\;\label{code:3-ratio}
				\If{$2(j-1) + 2d \le B$}{\label{code:3-sol-alone}
					\If{$\mathcal{R}(v_{i,j}) > r_{\max}$}{
						$r_{\max} \gets \mathcal{R}(v_{i,j})$\;
				 		$S_1  \gets v_{i,j}$\;
					}
				}
			}
		}
	}
	$ i_{\max} \gets 1$\;\label{code:3-max-row}
	\While{$ \mathcal{C}(S_2) \le B $} {\label{code:3-loop}
	    $M \gets [i], 1 \le i \le m$\;\label{code:3-maximums}
		\For{$ i \gets 1, m $} {\label{code:3-loop-m}
			\If{$ i \le i_{\max} $} {\label{code:3-loop-maximums}
				$M[i] \gets \textsc{max}(H[i, 0])$\;\label{code:3-zero}
			} \Else {
				$M[i] \gets \textsc{max}(H[i, i - i_{\max}])$\;\label{code:3-downshift}
			}
		}
		$v_{i,j} \gets \textsc{argMax}(M)$\;\label{code:3-selected}
		\If{$ \mathcal{C}(S_2 \cup v_{i,j}) \le B$} {\label{code:3-check}
			$S_2 \gets S_2 \cup_{1 \le k \le j} v_{i,k}$\;\label{code:3-updates}
			\For{$ k \gets j+1, n $} {
				$H[i, 0][k] \gets \frac{\mathcal{R}(v_{i,j})}{2(j-k)}$\;\label{code:3-update-heap}
			}
		}
		$ i_{\max} \gets \textsc{max}(i, i_{\max})$\;\label{code:3-new-imax}
	}
	\Return{$\max\{\mathcal{R}(S_1), \mathcal{R}(S_2)\}$}\label{code:3-return}
	\caption{\apxmre$( \text{Graph } $A$, \text{Budget } B)$}
	\label{alg:apxmre}
\end{algorithm}

Algorithm~\ref{alg:apxmre} initializes two solutions $S_1$ and $S_2$, and $r_{\max}$ which stores the maximum current reward (Line~\ref{code:3-set-params}). 
Here $S_1$ is built in the preprocessing phase by checking if there is enough budget to reach each vertex from the home $v_{1,1}$ and go back.
To compute $S_2$, a two-dimensional matrix $H$ of pointers is constructed (Line~\ref{code:3-matrix}).
Each pointer $H[i,d]$ points to a vector of length $n$.
In the preprocessing phase (Line~\ref{code:3-preprocessing}), 
the vectors pointed by $H[i,d]$, for $1 \le i \le m$ and $0 \le d \le i$, are initialized. In Line~\ref{code:3-ratio}, the  vector of $n$ positions pointed by $H[i,d]$ stores in position $j$, for $1 \le j \le n$, the ratio between the reward $\mathcal{R}(v_{i,j})$ and the length $2(j-1)+2d$ of the cycle from vertex $v_{i-d,1}$ to $v_{i,j}$.
Such a cycle consists of a vertical subpath of length $d$ and a horizontal subpath of length $j-1$. 

In other words, the vector pointed by $H[i,d]$ stores the cost of reaching the vertices in row $i$ when the farthest row already visited is row $i_{\max} = i-d$. 
In this case, to reach a vertex of row $i$ we have to consider extra  $d$ vertical steps.
While $S_2$ grows, the farthest row in $S_2$ increases and the cost for reaching the vertices of row $i$  changes (see Figure~\ref{fig:cases}).
The ascending order of the ratios (and in particular the maximum) is
however not preserved when the traveling cost (i.e., the ratio denominator) changes due to the change of the last row \imax of the solution. 
Therefore, for each row $i$, $1 \le i \le m$, we precompute the ratios considering all the possible distances $H[i,d]$, $0 \le i \le d$.
This preprocessing phase takes $\mathcal{O}(m^2 n)$ in time and space.

After the preprocessing procedure (loop Line~\ref{code:3-preprocessing}), the current furthest visited row \imax (Line~\ref{code:3-max-row}) is set to $1$ and the main loop starts (Line~\ref{code:3-loop}).
A vector $M$ of length $m$ is created (Line~\ref{code:3-maximums}) using for each row the root of $H[i,i-1]$.
Then vertex $v_{i,j}$ in $M$ with the largest ratio is greedily chosen (Line~\ref{code:3-selected}) and inserted in $S_2$ if feasible.
The current solution set $S_2$ is updated  including the vertices in row $i$ on the left of $v_{i,j}$  (Line~\ref{code:3-updates}).
Moreover, for any vertex in row $i$ on the right of the selected vertex, i.e., $v_{i,k}$ for $j+1 \le k \le n$, the ratios are updated in $H[i,0]$  (Line~\ref{code:3-update-heap}). 
This is because $S_2$ already reached the vertex $v_{i,j}$ in row $i$, and only a small horizontal distance has to be traversed to reach any remaining vertex in row $i$.
Finally, the value of the maximum visited row \imax is updated (Line~\ref{code:3-new-imax}).

Before proceeding to the next vertex selection, for each vertex $v_{i,j}$ we reason about the budget to be spent for the vertical movements. 
Indeed, if $i \le i_{\max}$ (Line~\ref{code:3-zero}), given the  solution $S_2$ already reaches $i$, the actual ratios for row $i$ are pointed by $H[i,0]$ which considers only the horizontal distances to reach vertices $v_{i,j}$  (Line~\ref{code:3-downshift}). 
If $i > i_{\max}$, some budget for the vertical steps has to be spent since row $i$ has never been reached.
Hence, the actual ratios for row $i$ are pointed by $H[i,i - i_{\max}]$ which considers  $i - i_{\max}$ vertical distance.
In practice, since the last row in $S_2$ varies, we change the costs of reaching the vertices, as explained in Figure~\ref{fig:cases}.

At the very end, the algorithm returns the best solution between subsets $S_1$ and $S_2$ (Line~\ref{code:3-return}).

The main cycle (Line~\ref{code:3-loop}) is executed at most $\frac{B}{2}$ times, resulting in $\mathcal{O}(B(m+n))$ time because the cycle (Line~\ref{code:3-loop-m}) costs only $\mathcal{O}(m)$ time.
Thus the overall time and space complexity are $\mathcal{O}(m^2n+B(m+n))$ and $\mathcal{O}(m^2n)$, respectively.

\paragraph{Example}
For the example in Figure~\ref{fig:graph-sc} and a budget $B=16$, the vertex with the largest ratio between its reward and its cost for adding it at the solution set $S$ is $v_{1, 2}$ with ratio $1.5$ (i.e., reward $3$ and cost $2$).
The next vertex selected is $v_{2, 4}$ with ratio $9/8 = 1.125$.
The cost is $8$ because in the current solution there are only vertices belonging to the first row, so the costs for exploring the second row take into account two more vertical units.
Now, the new vertex to add is $v_{2, 5}$ with ratio $6/2 = 3$.
This is due to the fact that $6$ is its reward, but only $2$ is the additional cost for visiting it, since $v_{2, 4}$ already belongs to $S$.
The next possible vertex is $v_{1, 4}$, and hence the solution has total reward $25$ and $S = \{ v_{1,1}, v_{1, 2}, v_{1, 3}, v_{1, 4}, v_{2, 1}, v_{2, 2}, v_{2, 3}, v_{2, 4}, v_{2, 5}\}$.


\subsection{The \apxmrc Algorithm}
The \apxmrclong (\apxmrc) algorithm is a greedy strategy that runs similar to \apxmre algorithm.
The only difference in the greedy rule is that the subvectors (Algorithm~\ref{alg:apxmre}, Line~\ref{code:3-ratio}) consider, at the numerator, the cumulative reward instead of the single vertex reward, i.e.,
\[
    H[i, d][j] \gets \frac{\sum_{k=1}^{j}{\mathcal{R}(v_{i,k})}}{2(j-1) + 2d}.
\]
Accordingly, \apxmrc has the same time and space complexity as \apxmre.

\paragraph{Example}
For the example in Figure~\ref{fig:graph-sc} and a budget $B=16$, the selected vertex with the maximum ratio between the cumulative reward and the cost for reaching it is $v_{3, 5}$ with ratio $28/12 = 2.333$.
Then, the next one is $v_{1, 2}$ with ratio $3/2 = 1.5$.
Finally, the next one is $v_{1, 3}$ with ratio $1/2 = 0.5$.
This is due to the fact the solution already had included the vertex $v_{1, 2}$. 
In conclusion, the solution has total reward $32$ and $S = \{ v_{1,1}, v_{1, 2}, v_{1, 3}, v_{3,1}, v_{3, 2}, v_{3, 3}, v_{3, 4}, v_{3, 5}\}$. 
In this case, \apxmre returns the optimal reward.

\subsection*{Comparison of Algorithms}
In Table~\ref{tab:comparison_algorithms} we summarize the time and space complexity of all the algorithms we presented, distinguishing between the time complexity of the preprocessing phase and the main algorithm phase.
Recalling that $B$ is upper bounded by $\mathcal{O}(mn)$, when $B=\Theta(mn)$, \optsc requires $\mathcal{O}(\max\{m,n\}^4)$. 
Thus, all our non-optimal algorithms are faster than \optsc, even including the preprocessing time. 
The most expensive greedy algorithms are \apxmre and \apxmrc requiring $\mathcal{O}(\max\{m,n\}^3)$ operations.
The time complexity of \apxmre and \apxmrc is dominated by the preprocessing time which is larger than that of \optsc.
Algorithms \gdyme and \gdymc are always faster than \optsc, requiring $\mathcal{O}(\max\{m,n\}^2\log m)$ time, but they slightly lose in reward-performance.
In terms of space, \gdyme and \gdymc are more efficient than \optsc which requires the same space as \apxmre and \apxmrc.

\begin{table}[htbp]
\caption{Comparison between the algorithms.}
	\label{tab:comparison_algorithms}
	\centering
	\begin{tabular}{c|ccc}
		\hline
		\multirow{2}{*}{Algorithm} & \multicolumn{2}{c}{Time complexity} & \multirow{2}{*}{Space complexity} \\
		& Preprocessing & Main & \\
		\hline
		\optsc & $\mathcal{O}(mn)$  & $\mathcal{O}(Bmn)$ & $\mathcal{O}(mn+Bm)$ \\
		\gdyme & $\mathcal{O}(mn)$  & $\mathcal{O}(B\log(mn))$ & $\mathcal{O}(mn)$ \\
		\gdymc & $\mathcal{O}(mn)$  & $\mathcal{O}(B\log(m))$ & $\mathcal{O}(mn+m)$ \\
		\apxmre & $\mathcal{O}(m^2n)$  &  $\mathcal{O}(B(m+n))$ & $\mathcal{O}(m^2n)$ \\
		\apxmrc & $\mathcal{O}(m^2n)$  &  $\mathcal{O}(B(m+n))$ & $\mathcal{O}(m^2n)$ \\
		\hline
	\end{tabular}
\end{table}

\section{Guaranteed Bound for Approximation Ratio} \label{sec:approximation}
%
\revision{In this section, we analyze the performance guarantee of our approximation algorithms.
Specifically, we initially highlight how the selected element by the greedy step improves the objective function in Lemma~\ref{lemma:marginal gain}, then we show  how the partial greedy solution is bounded with respect to the optimal solution in Lemma~\ref{lemma:distance from optimum}, and finally we prove the guaranteed approximation bound of algorithms \apxmre and \apxmrc exploiting the two aforementioned Lemmas in Theorem~\ref{th:bound}.}


Let \(G_i\) be the solution obtained from \apxmre at step \(i\).
Moreover, at each iteration \(i\), \apxmre adds to the solution set \(G_{i-1}\) the element \({x_i}\) that maximizes the ratio between the marginal gain and the marginal cost of adding such an element (see Algorithm~\ref{alg:apxmre}, Line~\ref{code:3-selected}). 
That is,
\begin{equation}
    \label{eq:passo greedy}
    x_i = \argmax_{v_{i,j} \in V} \frac{\mathcal{R}(G_{i-1} \cup v_{i,j}) - \mathcal{R}(G_{i-1})}{\mathcal{C}(G_{i-1} \cup v_{i,j}) - \mathcal{C}(G_{i-1})}
\end{equation}
For our problem \(\mathcal{R}(G_{i-1} \cup v_{i,j}) - \mathcal{R}(G_{i-1}) = \mathcal{R}(x_i)\), due to the modularity property. 
In Lemma~\ref{lemma:marginal gain} we prove that the inclusion of the element selected by the greedy step (Eq.~\eqref{eq:passo greedy}) improves the objective function by at least a quantity proportional to the current distance to the optimum. 
Namely, we are able to bound the improvement of the greedy step from below the improvement on the objective function.
Let the solution \(G_{\ell+1}\) from \apxmre the first iteration when it violates the budget constraint and stops. 
\begin{lemma}\label{lemma:marginal gain}
For any \(i = 1, \ldots, \ell+1\), it holds that
\[
\mathcal{R}(G_i) - \mathcal{R}(G_{i-1}) \ge \frac{\mathcal{R}(\opt) - \mathcal{R}(G_{i-1})}{B} (\mathcal{C}(G_i) - \mathcal{C}(G_{i-1})),
\]
where \(\opt\) is the optimal solution considering budget \(B\).
\end{lemma}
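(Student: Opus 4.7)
The plan is to exploit the greedy rule in Eq.~\eqref{eq:passo greedy} by comparing the ratio achieved by the selected element $x_i$ against the ratios obtained by hypothetically adding each element of the optimum that is not yet in $G_{i-1}$. The intuition is that the greedy ratio is, by definition, at least as large as the weighted average ratio one would obtain from the leftover optimum elements, and since $\opt$ has total cost at most $B$, a summation (pointwise) will relate the marginal gain to the residual distance $\mathcal{R}(\opt)-\mathcal{R}(G_{i-1})$ normalized by $B$.

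More concretely, first I would invoke the modularity of $\mathcal{R}$ to rewrite the greedy choice (Eq.~\eqref{eq:passo greedy}) simply as $x_i=\argmax_{v} \mathcal{R}(v)/\bigl(\mathcal{C}(G_{i-1}\cup v)-\mathcal{C}(G_{i-1})\bigr)$, since the numerator collapses to $\mathcal{R}(x_i)$. Second, for every $v \in \opt\setminus G_{i-1}$ I would write the fundamental inequality
\[
\mathcal{R}(x_i)\cdot\bigl(\mathcal{C}(G_{i-1}\cup v)-\mathcal{C}(G_{i-1})\bigr)\;\ge\;\mathcal{R}(v)\cdot\bigl(\mathcal{C}(G_i)-\mathcal{C}(G_{i-1})\bigr)
\]
which is a direct cross multiplication of the greedy optimality condition. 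Third, I would sum this inequality over all $v\in\opt\setminus G_{i-1}$; the modularity of $\mathcal{C}$ (proved at the end of Section~\ref{sec:definition}) lets me collapse the sum of marginal costs to $\mathcal{C}(\opt\setminus G_{i-1})$, and the modularity of $\mathcal{R}$ collapses the right-hand side sum to $\mathcal{R}(\opt\setminus G_{i-1})$. Fourth, I would upper-bound $\mathcal{C}(\opt\setminus G_{i-1})\le\mathcal{C}(\opt)\le B$ using monotonicity of $\mathcal{C}$ and the feasibility of $\opt$, and lower-bound $\mathcal{R}(\opt\setminus G_{i-1})=\mathcal{R}(\opt)-\mathcal{R}(\opt\cap G_{i-1})\ge\mathcal{R}(\opt)-\mathcal{R}(G_{i-1})$ using monotonicity of $\mathcal{R}$. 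Substituting $\mathcal{R}(x_i)=\mathcal{R}(G_i)-\mathcal{R}(G_{i-1})$ and dividing through by $B$ yields the claim.

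The most delicate point is making sure the greedy-choice inequality truly applies to every $v\in\opt\setminus G_{i-1}$ for all $i$ up to $\ell+1$. Algorithm~\ref{alg:apxmre} selects the argmax over the structure $M$ without pre-filtering by feasibility; the feasibility test only decides whether $x_i$ is appended to $S_2$, so at steps $1,\ldots,\ell+1$ the rule in Eq.~\eqref{eq:passo greedy} quantifies over all remaining vertices and the inequality is valid. I would explicitly remark on this to justify extending the argument up to the first infeasible addition $G_{\ell+1}$. The rest is a clean algebraic manipulation that I would carry out via induction on $i$ only implicitly, since each step's inequality depends only on $G_{i-1}$ and the greedy rule; no stronger recursive invariant is needed to close the argument.
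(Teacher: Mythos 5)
Your proposal is correct and follows essentially the same route as the paper: both compare the greedy ratio of $x_i$ against each element of $\opt\setminus G_{i-1}$, sum the resulting inequalities, bound the total marginal cost of the leftover optimum elements by $\mathcal{C}(\opt)\le B$, and lower-bound the leftover reward by $\mathcal{R}(\opt)-\mathcal{R}(G_{i-1})$ via monotonicity. The only cosmetic difference is that you invoke modularity of $\mathcal{R}$ up front to collapse the marginal gains, where the paper carries out the same collapse through an explicit telescoping sum of increments $Z_j$.
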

\begin{proof}
Suppose \(\opt\) is the optimal solution considering budget \(B\).
For any \(i = 1, \ldots, \ell+1\) we have
\begin{align*}
    \mathcal{R}(\opt) - \mathcal{R}(G_{i-1}) &\le \mathcal{R}(\opt \cup G_{i-1}) - \mathcal{R}(G_{i-1})  \\
    &= \mathcal{R}(\opt \setminus G_{i-1} \cup G_{i-1}) - \mathcal{R}(G_{i-1}).
\end{align*}
Assume \(\opt\setminus G_{i-1} = \{y_1, \ldots, y_m\}\), where \(y_i\) are vertices in \(\opt\) but not in solution \(G_{i-1}\). Let us now define for any \(j=1, \ldots, m\) the marginal reward for any element \(y_j\) as \(Z_j = \mathcal{R}(G_{i-1} \cup \{y_1, \ldots, y_j\}) - \mathcal{R}(G_{i-1} \cup \{y_1, \ldots, y_{j-1}\})\).
Thus,
\begin{multline*}
    \mathcal{R}(\opt) - \mathcal{R}(G_{i-1}) \le \mathcal{R}(\opt \setminus G_{i-1} \cup G_{i-1}) - \mathcal{R}(G_{i-1}) = \\
    \mathcal{R}(G_{i-1} \cup \{y_1, \ldots, y_m\}) - \mathcal{R}(G_{i-1}) = \\
    \mathcal{R}(G_{i-1} \cup \{y_1, \ldots, y_m\}) - \mathcal{R}(G_{i-1} \cup \{y_1, \ldots, y_{m-1}\}) + \\
    \mathcal{R}(G_{i-1} \cup \{y_1, \ldots, y_{m-1}\}) - \\
    \mathcal{R}(G_{i-1} \cup \{y_1, \ldots, y_{m-2}\}) + \ldots - \mathcal{R}(G_{i-1}) = \sum_{j=1}^m Z_j.
\end{multline*}
Notice that, for any $j$,
\begin{align}
    \label{eq:bound Z}
    \frac{Z_j}{\mathcal{C}(G_{i-1} \cup y_j) - \mathcal{C}(G_{i-1})} &\le \frac{\mathcal{R}(G_{i-1} \cup y_j) - \mathcal{R}(G_{i-1})}{\mathcal{C}(G_{i-1} \cup y_j) - \mathcal{C}(G_{i-1})} \nonumber \\ 
    &\le \frac{\mathcal{R}(G_{i}) - \mathcal{R}(G_{i-1})}{\mathcal{C}(G_{i}) - \mathcal{C}(G_{i-1})}
\end{align}
where the first inequality holds due to the modularity of $\mathcal{R}$ (precisely, Eq.~\eqref{eq:sumbodularity def 1} where $Y=G_{i-1} \cup \{y_1, \ldots, y_j\}$ and $X=G_{i-1} \cup y_j$ and $X \subseteq Y$ ) and the second inequality holds due to the greedy rule, according to
Eq.~\eqref{eq:passo greedy}. 
Therefore,
\begin{align*}
    \mathcal{R}(\opt) &- \mathcal{R}(G_{i-1}) \le \sum_{j=1}^m Z_j \\
    &\le \frac{\mathcal{R}(G_{i}) - \mathcal{R}(G_{i-1})}{\mathcal{C}(G_{i}) - \mathcal{C}(G_{i-1})} \sum_{j=1}^m \mathcal{C}(G_{i-1} \cup y_j) - \mathcal{C}(G_{i-1}) \\
    &\le \frac{\mathcal{R}(G_{i}) - \mathcal{R}(G_{i-1})}{\mathcal{C}(G_{i}) - \mathcal{C}(G_{i-1})} \sum_{y \in \opt} \mathcal{C}(y).
\end{align*}

Since function \(c\) can be computed in polynomial time, we can bound the sum of the costs over all the elements with the cost of the optimal solution that at its turn cannot exceed $B$. Hence,
\pushQED{\qed} 
\begin{align*}
    &\mathcal{R}(\opt) - \mathcal{R}(G_{i-1}) \le \frac{\mathcal{R}(G_{i}) - \mathcal{R}(G_{i-1})}{\mathcal{C}(G_{i}) - \mathcal{C}(G_{i-1})} \sum_{y \in \opt} \mathcal{C}(y) \\
    &\le \mathcal{C}(\opt) \frac{\mathcal{R}(G_{i}) - \mathcal{R}(G_{i-1})}{\mathcal{C}(G_{i}) - \mathcal{C}(G_{i-1})} \le B \frac{\mathcal{R}(G_{i}) - \mathcal{R}(G_{i-1})}{\mathcal{C}(G_{i}) - \mathcal{C}(G_{i-1})}. \qedhere
\end{align*}
\popQED
\let\qed\relax
\end{proof}

In the following, Lemma~\ref{lemma:distance from optimum} proves that the solution of the greedy algorithm at each iteration can be bounded with respect to the optimal solution.
\begin{lemma}\label{lemma:distance from optimum}
For any \(i = 1, \ldots, \ell+1\) it holds that
\[
    \mathcal{R}(G_i) \ge \left[1 - \prod_{k=1}^{i} \left( 1 - \frac{\mathcal{C}(G_k) - \mathcal{C}(G_{k-1})}{B} \right) \right] \mathcal{R}(\opt),
\]
where \(\opt\) is the optimal solution considering budget \(B\).
\end{lemma}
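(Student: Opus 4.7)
The plan is to establish the statement by induction on $i$, using Lemma~\ref{lemma:marginal gain} as the key step at each inductive stage. This is a standard argument in the submodular/greedy literature (see e.g.\ Krause--Guestrin), so I do not expect any deep obstacle; the only care required is to algebraically verify that the product form telescopes correctly when we combine the one-step improvement bound with the induction hypothesis.

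\textbf{Base case $i=1$.} Since $G_0=\emptyset$ (more precisely, the greedy starts from $\{v_{1,1}\}$ which contributes no reward and no cost), we have $\mathcal{R}(G_0)=0$ and $\mathcal{C}(G_0)=0$. Applying Lemma~\ref{lemma:marginal gain} with $i=1$ directly gives
\[
\mathcal{R}(G_1) \ge \frac{\mathcal{R}(\opt)}{B}\bigl(\mathcal{C}(G_1)-\mathcal{C}(G_0)\bigr) = \left[1-\left(1-\frac{\mathcal{C}(G_1)-\mathcal{C}(G_0)}{B}\right)\right]\mathcal{R}(\opt),
\]
which matches the target formula.

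\textbf{Inductive step.} Assume the claim for $i-1$. By Lemma~\ref{lemma:marginal gain},
\[
\mathcal{R}(G_i) \ge \mathcal{R}(G_{i-1}) + \frac{\mathcal{R}(\opt)-\mathcal{R}(G_{i-1})}{B}\bigl(\mathcal{C}(G_i)-\mathcal{C}(G_{i-1})\bigr),
\]
which I would rewrite as
\[
\mathcal{R}(G_i) \ge \mathcal{R}(G_{i-1})\left(1-\frac{\mathcal{C}(G_i)-\mathcal{C}(G_{i-1})}{B}\right) + \mathcal{R}(\opt)\,\frac{\mathcal{C}(G_i)-\mathcal{C}(G_{i-1})}{B}.
\]
Substituting the inductive hypothesis for $\mathcal{R}(G_{i-1})$ and denoting $\alpha_k := 1-\tfrac{\mathcal{C}(G_k)-\mathcal{C}(G_{k-1})}{B}$, the right-hand side becomes
\[
\mathcal{R}(\opt)\left[\Bigl(1-\prod_{k=1}^{i-1}\alpha_k\Bigr)\alpha_i + (1-\alpha_i)\right] = \mathcal{R}(\opt)\left[1-\prod_{k=1}^{i}\alpha_k\right],
\]
which is exactly the desired bound. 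The only subtlety is ensuring the inductive hypothesis is applied as a lower bound and that the coefficient $\alpha_i$ multiplying $\mathcal{R}(G_{i-1})$ is non-negative; this is automatic for $i\le \ell$, since the greedy only appends a vertex whose marginal cost does not exceed $B$, and in fact $\alpha_i \in [0,1]$. The step $i=\ell+1$ (the first infeasible iteration) is handled identically, since Lemma~\ref{lemma:marginal gain} is already stated for that index. Thus induction closes and the lemma follows.
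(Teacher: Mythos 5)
Your proof is correct and follows essentially the same route as the paper's: induction on $i$, with the base case read off from Lemma~\ref{lemma:marginal gain} (using $\mathcal{R}(G_0)=\mathcal{C}(G_0)=0$) and the inductive step obtained by rewriting the one-step bound as a convex-type combination of $\mathcal{R}(G_{i-1})$ and $\mathcal{R}(\opt)$ and substituting the hypothesis so the product telescopes. Your added remark about the sign of the coefficient $1-\tfrac{\mathcal{C}(G_i)-\mathcal{C}(G_{i-1})}{B}$ is a point the paper leaves implicit, but otherwise the arguments coincide.
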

\begin{proof}
For \(i=1\), the proof follows directly from Lemma~\ref{lemma:marginal gain} since $G_0= \emptyset$, $\mathcal{R}(G_0)=0$ and $\mathcal{C}(G_0)=0$. 
For \(i>1\), using Lemma~\ref{lemma:marginal gain} and  inductive hypothesis on $\mathcal{R}(G_{i-1})$, it holds:
\pushQED{\qed} 
\begin{align*}
    \mathcal{R}(G_i) &= \mathcal{R}(G_{i-1}) - \mathcal{R}(G_{i-1}) + \mathcal{R}(G_i) 
    \\
    &\ge \mathcal{R}(G_{i-1}) + \frac{\mathcal{C}(G_i) - \mathcal{C}(G_{i-1})}{B} (\mathcal{R}(\opt) - \mathcal{R}(G_{i-1})) 
    \\
    &= \left( 1 - \frac{\mathcal{C}(G_i) - \mathcal{C}(G_{i-1})}{B} \right) \mathcal{R}(G_{i-1}) + 
    \\ 
    &\frac{\mathcal{C}(G_i) - \mathcal{C}(G_{i-1})}{B} \mathcal{R}(\opt) 
    \\
    &\ge \left( 1 - \frac{\mathcal{C}(G_i) - \mathcal{C}(G_{i-1})}{B} \right) \cdot
    \\
    &\left[ 1 - \prod_{k=1}^{i-1} \left( 1 - \frac{\mathcal{C}(G_k) - \mathcal{C}(G_{k-1})}{B} \right)\right] \mathcal{R}(\opt) +
    \\
    &\frac{\mathcal{C}(G_i) - \mathcal{C}(G_{i-1})}{B} \mathcal{R}(\opt) 
    \\
    &= \left( 1 - \prod_{k=1}^{i} \left( 1 - \frac{\mathcal{C}(G_k) - \mathcal{C}(G_{k-1})}{B} \right) \right) \mathcal{R}(\opt). \qedhere 
\end{align*}

\popQED
\let\qed\relax
\end{proof}

Finally, exploiting Lemma~\ref{lemma:distance from optimum}, we prove that \apxmre  provides a constant approximation ratio. 

%
\begin{theorem}\label{thm-approx}
Algorithm \apxmre guarantees a \( \frac{1}{2} \left( 1 - \frac{1}{e} \right) \) approximation ratio for \prob.
\end{theorem}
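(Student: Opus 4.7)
The plan is to apply Lemma~\ref{lemma:distance from optimum} at the first index $\ell+1$ where the greedy process exceeds the budget (i.e., $\mathcal{C}(G_{\ell+1}) > B$), and then use the modularity of $\mathcal{R}$ to extract a factor of $1/2$ from the ``one-step-too-many'' set $G_{\ell+1}$. At that index, the weights $a_k := \frac{\mathcal{C}(G_k)-\mathcal{C}(G_{k-1})}{B}$ sum to strictly more than $1$, so the termwise estimate $1-a_k \le e^{-a_k}$ gives
\[
\prod_{k=1}^{\ell+1}\!\left(1-a_k\right) \;\le\; \exp\!\left(-\sum_{k=1}^{\ell+1}a_k\right) \;<\; \frac{1}{e},
\]
and Lemma~\ref{lemma:distance from optimum} then yields $\mathcal{R}(G_{\ell+1}) > \left(1 - \frac{1}{e}\right)\mathcal{R}(\opt)$.

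The second step uses that $\mathcal{R}$ is modular and $x_{\ell+1}\notin G_\ell$, so $\mathcal{R}(G_{\ell+1}) = \mathcal{R}(G_\ell) + \mathcal{R}(x_{\ell+1})$; at least one of the two summands is therefore at least $\frac{1}{2}(1-1/e)\,\mathcal{R}(\opt)$. To convert this into a guarantee on the algorithm's output $\max\{\mathcal{R}(S_1),\mathcal{R}(S_2)\}$, I would argue that $S_2 = G_\ell$ by construction, while $S_1$, which is built in the preprocessing loop (Lines~\ref{code:3-preprocessing}--\ref{code:3-sol-alone}) as the best singleton whose round-trip cost $2(i-1)+2(j-1)$ fits within $B$, dominates the reward of any vertex eligible for the greedy rule, so in particular $\mathcal{R}(S_1) \ge \mathcal{R}(x_{\ell+1})$. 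The two cases together give the desired ratio.

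The main obstacle I anticipate is not conceptual but bookkeeping: one must verify rigorously that $x_{\ell+1}$ is itself individually reachable within the budget, so that it is enumerated during the construction of $S_1$; this follows because the greedy rule would never select a vertex whose singleton cost exceeds $B$, but the argument should be stated explicitly. A minor edge case is when the greedy never overflows: then it has exhausted the entire graph and is trivially optimal, so the approximation bound holds a fortiori.
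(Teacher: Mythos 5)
Your proposal follows essentially the same route as the paper's proof: apply Lemma~\ref{lemma:distance from optimum} at the first budget-violating step $\ell+1$, deduce $\mathcal{R}(G_{\ell+1}) \ge \left(1-\frac{1}{e}\right)\mathcal{R}(\opt)$, and then use modularity to split $G_{\ell+1}$ into $G_\ell$ plus the last selected singleton so that the better of $S_2$ and $S_1$ captures at least half. The only cosmetic difference is that you bound the product via $1-x\le e^{-x}$ whereas the paper normalizes by $\mathcal{C}(G_{\ell+1})$ and minimizes $1-\prod_k(1-a_k/A)$ at equal increments; the feasibility caveat you flag for $x_{\ell+1}$ is likewise left implicit in the paper, which simply asserts $\mathcal{R}(x_{\ell+1})\le\mathcal{R}(x^*)$.
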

\begin{proof}
By applying Lemma~\ref{lemma:distance from optimum}, by the fact that \(\mathcal{C}(G_{\ell+1}) > B\) (since it violates the constraint), and recalling that for $a_1 , \ldots, a_n \in \mathbb{R}$ such that $\sum_{i=1}^n a_i = A$, the function $(1 - \Pi_{i=1}^{n} (1 - \frac{a_i}{A}))$ achieves its minimum at $a_1 = \ldots = a_n = \frac{A}{n}$, we have:
\begin{multline*}
    \mathcal{R}(G_{\ell+1}) \ge \left[ 1 - \prod_{k=1}^{\ell+1} \left( 1 - \frac{\mathcal{C}(G_k) - \mathcal{C}(G_{k-1})}{B} \right) \right] \mathcal{R}(\opt)
    \\
     \ge \left[ 1 - \prod_{k=1}^{\ell+1} \left( 1 - \frac{\mathcal{C}(G_k) - \mathcal{C}(G_{k-1})}{\mathcal{C}(G_{\ell+1})} \right) \right] \mathcal{R}(\opt)
    \\
     \ge \left[ 1 - \left( 1 - \frac{1}{(\ell+1)} \right)^{\ell+1} \right] \mathcal{R}(\opt) 
\ge \left( 1 - \frac{1}{e} \right) \mathcal{R}(\opt).
\end{multline*}
Since \(r\) is a modular function we know that the marginal gain selecting element \(x_{\ell+1}\) at step \(\ell+1\) is less than or equal to the gain given by a solution that contains the element with maximum gain, i.e., \(\mathcal{R}(G_{\ell+1}) - \mathcal{R}(G_\ell) = \mathcal{R}(x_{\ell+1}) \le \mathcal{R}(x^*)\), where \(\mathcal{R}(x_{\ell+1})\) is the marginal gain obtained adding the $(\ell+1)$-th vertex selected by greedy rule and \(x^* = \arg\max_{x\in V : \mathcal{C}(x)\le B} \mathcal{R}(x)\), (i.e., $x^*$ is the vertex with the maximum reward reachable with budget $B$), it follows:
\[
    \mathcal{R}(G_\ell) + \mathcal{R}(x^*) \ge \mathcal{R}(G_{\ell+1}) \ge \left( 1 - \frac{1}{e} \right) \mathcal{R}(\opt), \text{ and }
\]
\pushQED{\qed} 
\[
    \max\{ \mathcal{R}(G_\ell), \mathcal{R}(x^*) \} \ge \frac{1}{2} \left( 1 - \frac{1}{e} \right) \mathcal{R}(\opt). \qedhere
\]
\let\qed\relax
\popQED
\end{proof}

In Algorithm~\ref{alg:apxmre}, $x^*=S_1$ and $\mathcal{R}(S_1)=\mathcal{R}(x^*)$. Moreover, $\mathcal{R}(S_2) \ge \mathcal{R}(G_\ell)$ because $S_2$ contains not only the vertex selected by the greedy rule $\{x_1, \ldots, x_{\ell}\}$, but also the vertices on its left on the same row (see Algorithm~\ref{alg:apxmre}, Line~\ref{code:3-updates}).

It is worth mentioning that the same approximation ratio also holds for \apxmrc.
Lemma~\ref{lemma:marginal gain} holds even when the greedy algorithm picks a subset of vertices in a row.
\apxmrc considers the cumulative reward of a partial row, that is not already in the solution, up to the vertex in consideration. 
The cumulative reward \(\mathcal{R}(G_i \cup y_j)\) for any \(j\) is greater than or equal to that obtained by picking only the last element $y_j$, whereas the cost \(\mathcal{C}(G_i \cup y_j)\) remains the same since the cost for visiting only the last element $y_j$ or all the elements up to $y_j$, does not change. 
Therefore, the second inequality in Eq.~\eqref{eq:bound Z} of Lemma~\ref{lemma:marginal gain}  remains true even if \(G_i \setminus G_{i-1}\) is a subset of vertices in a row of the graph.
So, repeating the reasoning of Lemma~\ref{lemma:distance from optimum} and Theorem~\ref{thm-approx}, we prove that algorithm  \apxmrc guarantees the same approximation ratio as \apxmre.

\section{Performance Evaluation}\label{sec:simulations}
We evaluate the performance of the new algorithms $\{\gdyme, \gdymc, \apxmre, \apxmrc\}$, and compare them with the optimal algorithm \optsc.
For each scenario, each algorithm is tested with an increasing budget $B=2n, \ldots, 2(mn+m)$,
and we plot the average of the results on $30$ instances ($10$ in the case of real data) along with their $95\%$ confidence interval.
Specifically, in Figures~\ref{fig:experiments} and~\ref{fig:experiments-real}, organized in paired plots, we report:
\begin{itemize}
    \item on the left plot of each pair, the collected reward $\mathcal{R}$ in percentage ($100\%$ means all the available rewards have been collected), with respect to the used budget $B$ in percentage ($100\%$ means $B=2(mn+m)$);
    \item on the right plot of each pair, the ratio $\rho = \frac{\mathcal{R}(\alg)}{\mathcal{R}(\optsc)}$ between the reward of $\alg~=~\{$\gdyme, \gdymc, \apxmre, \apxmrc$\}$ and the reward of \optsc.
    Recall $\optsc$ optimally solves \prob.
\end{itemize}

\begin{figure}[ht]
	\centering
	\subfloat[$\theta = 0$.]{%
		\includegraphics[height=2.85cm]{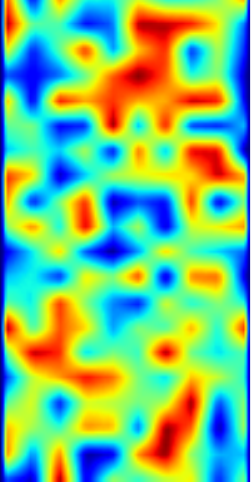}
		\label{img:plot_heatmap_100}
	}
	\subfloat[$\theta = 0.9$.]{%
		\includegraphics[height=2.85cm]{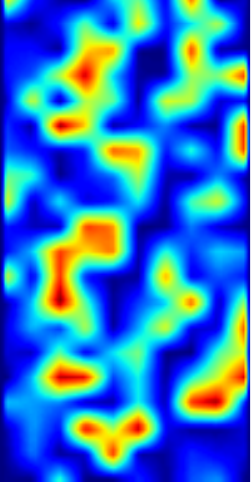}
		\label{img:plot_heatmap_300}
	}
	\subfloat[$\theta = 1.8$.]{%
		\includegraphics[height=2.85cm]{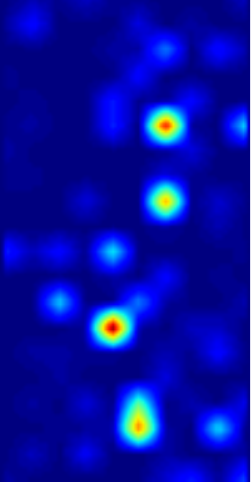}
		\label{img:plot_heatmap_500}
	}
	\subfloat[$\theta = 2.7$.]{%
		\includegraphics[height=2.85cm]{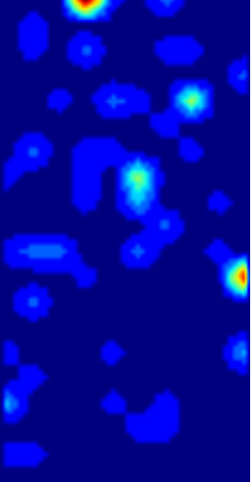}
		\label{img:plot_heatmap_700}
	}
	\subfloat[Real.]{%
		\includegraphics[height=2.85cm]{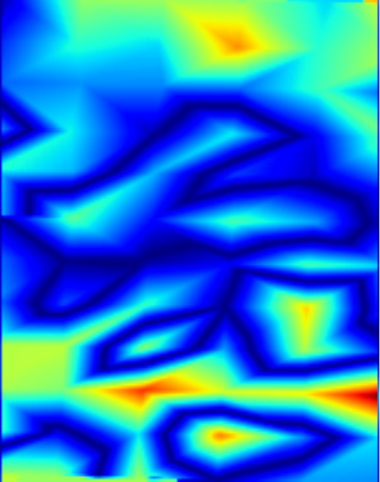}
		\label{img:plot_heatmap_1010}
	}
	\hfill
	\subfloat[Legend.]{%
		\includegraphics[width=5cm]{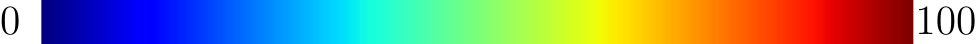}
		\label{img:plot_heatmap_legend}
	}
	\caption{Reward maps for synthetic (a)--(d), and real (e) graph instances. Synthetic graphs are $A(100,50)$ while real graph is $A(274,214)$.
	 \revision{The heat-maps, rendered after a \emph{bilinear interpolation} using matplotlib software, highlight the areas with large and small rewards with hot and cold colors, respectively (see legend (f)).
	 Synthetic data are randomly generated, while real data are taken from real observations~\cite{thayer2018multi}.
	 In particular, real moisture levels are then converted in actual rewards through a suitable interpolation~\cite{oliver1990kriging}.}}
	\label{img:heatmaps}
\end{figure}


\begin{figure*}[ht]
    \captionsetup[subfloat]{farskip=-3pt,captionskip=-7pt}
	\centering
	\subfloat[$A(100,50), \theta = 0$.]{%
		\includegraphics[scale=0.90]{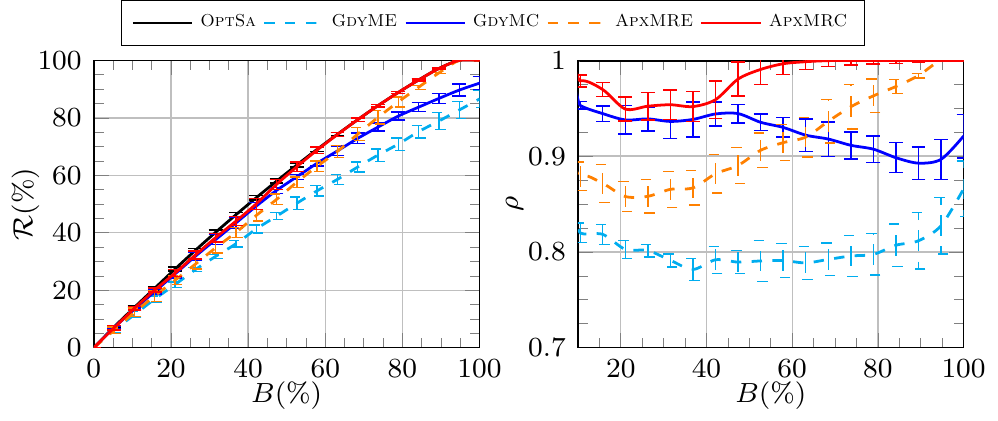}
		\label{fig:plot_compare_100}
	}
	\subfloat[$A(50,100), \theta = 0$.]{%
		\includegraphics[scale=0.90]{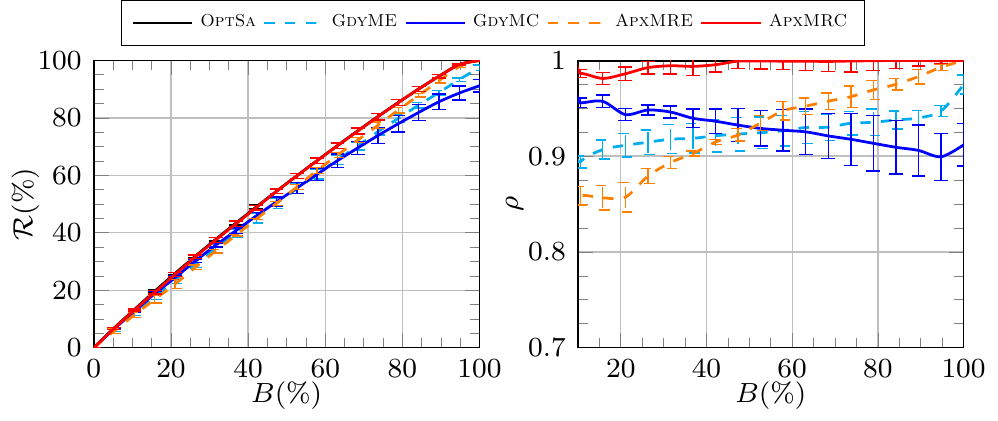}
		\label{fig:plot_compare_200}
	}
 	\hfill
	\subfloat[$A(100,50), \theta = 0.8$.]{%
		\includegraphics[scale=0.90]{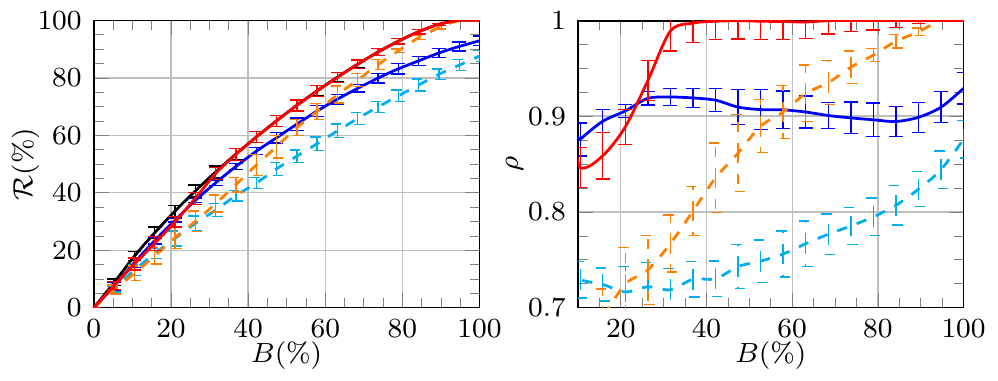}
		\label{fig:plot_compare_300}
	}
	\subfloat[$A(50,100), \theta = 0.8$.]{%
		\includegraphics[scale=0.90]{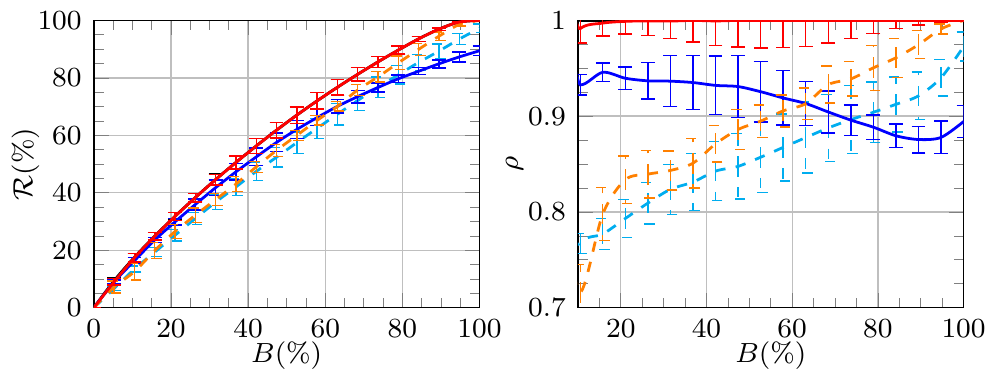}
		\label{fig:plot_compare_400}
	}
	\hfill
	\subfloat[$A(100,50), \theta = 1.9$.]{%
		\includegraphics[scale=0.90]{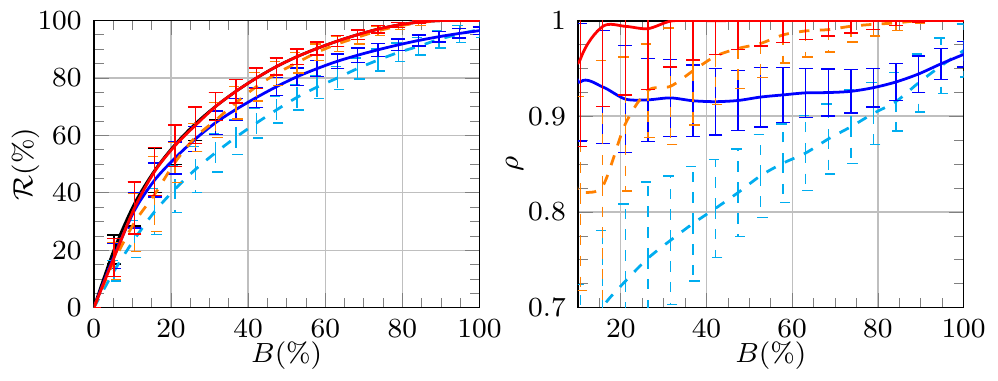}
		\label{fig:plot_compare_500}
	}
	\subfloat[$A(50,100), \theta = 1.9$.]{%
		\includegraphics[scale=0.90]{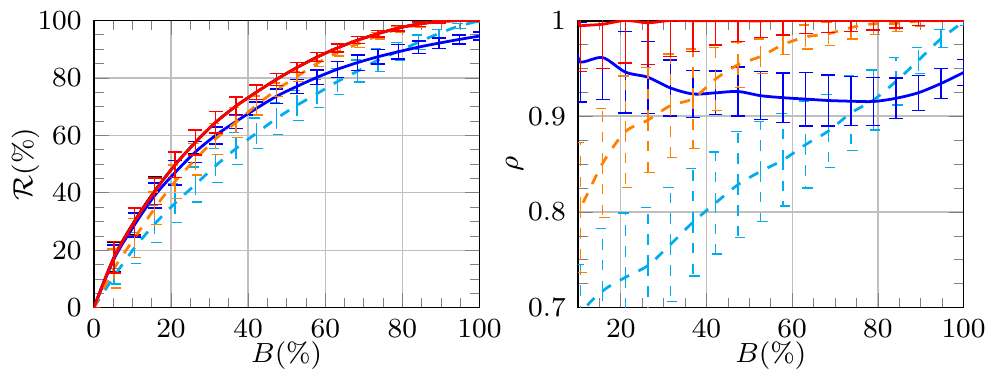}
		\label{fig:plot_comparen_600}
	}
 	\hfill
	\subfloat[$A(100,50), \theta = 2.7$.]{%
		\includegraphics[scale=0.90]{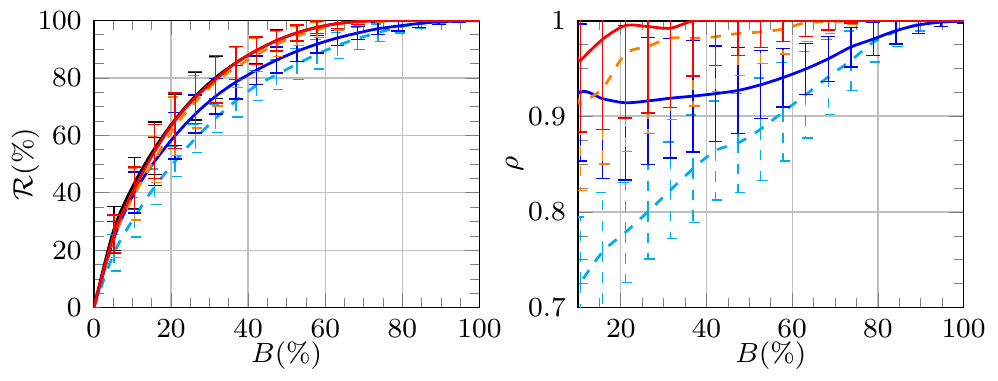}
		\label{fig:plot_compare_700}
	}
	\subfloat[$A(50,100), \theta = 2.7$.]{%
		\includegraphics[scale=0.90]{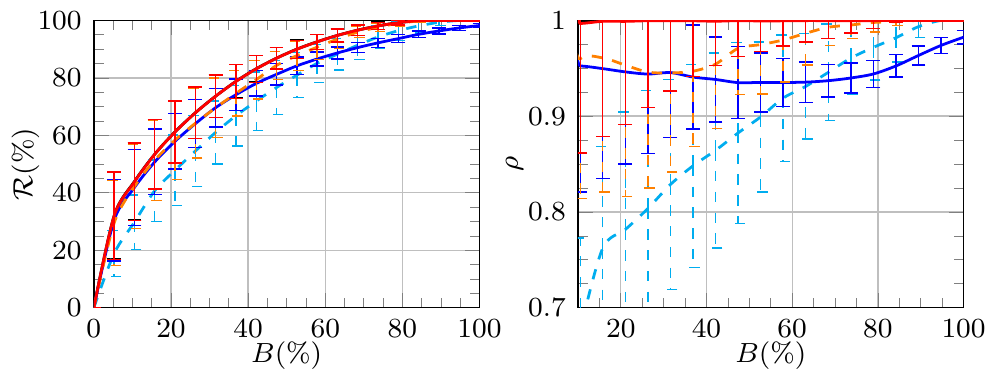}
		\label{fig:plot_compare_800}
	}
	\caption{Performance comparison of the proposed \gdyme, \gdymc, \apxmre, \apxmrc, and \optsc algorithms.}
	\label{fig:experiments}
\end{figure*}

\subsection{On Synthetic Data}
In this section, we evaluate the performance of the presented algorithms on generated synthetic data.
We assume the rewards are properly randomized following the Zipf distribution~\cite{tullo2003modelling}, which is characterized by a single parameter $\theta$ that rules the occurrences of the reward values.
In our evaluations, the rewards are integers in the interval $[0,100)$.
When $\theta=0$, the rewards are uniformly distributed in $[0,100)$, while when $\theta$ increases, the smallest rewards become more and more frequent than the largest ones.
In our setting, we assume $\theta=\{0,0.9,1.8,2.7\}$.

Examples of random reward maps are illustrated in Figures~\ref{img:plot_heatmap_100}--\ref{img:plot_heatmap_700}, where the hot and cold colors represent high and low rewards, respectively.
In our applications mentioned at the beginning, a hot color represents a place where the robot has to operate on a plant or pick an item from the shelves with a high reward (high priority), while a cold color represents a location
with a low reward (low priority).
When $\theta=0$ (Figure~\ref{img:plot_heatmap_100}), the heat map illustrates rewards uniformly distributed, while when $\theta=1.8$  (Figure~\ref{img:plot_heatmap_500}), the heat map features highly unbalanced rewards, with very dispersed positions with high reward.
In our experiments, we finally vary the size of the aisle-graphs considering two different layouts: more rows than columns ($m > n$, $A(100,50)$) and more columns than rows ($n > m$, $A(50,100)$).

For each $A$ and $\theta$, we run the five algorithms.
Fixing $m$, $n$, and $\theta$, we generate $30$ different reward distributions and solve them with each algorithm.
For each random distribution, each algorithm is tested with an increasing budget.
Since the overall reward is randomly generated and differs for each graph, we return the reward gained as a percentage of the overall reward of each graph. 
Finally, we plot the average of the results along with their $95\%$ confidence interval.

Figure~\ref{fig:experiments} compares the reward collected by the different algorithms for several values of $\theta$. 
When $\theta = 0$, the percentage of the collected reward almost increases linearly with the budget; this is reasonable since the rewards are uniformly distributed, and so also the gain for unit of cost is uniformly distributed.
There are no subareas of the map that are more important to visit because their reward per unit is larger.

When $\theta \ge 1.9$, there are instead localized subareas with higher rewards.
In such cases, 
the robot prioritizes those areas (hot areas of the heat maps) and collects a high percentage of reward (up to $60\%$ when $\theta=1.9$ and up to $80\%$ when $\theta=2.7$) already with a small budget $B \approx 20\%$.
Up to $B \approx 60\%$, the value of $\theta$ makes the difference in the percentage of the collected reward.
With $B \approx 80\%$, instead, $\theta$ does not meaningfully impact  the reward.
This means that when $B \ge 80\%$, there is enough budget to cover the most meaningful part of the map in any case, and it is not important
to first select  the more promising paths.

When $\theta \le 0.8$, the  collected reward
highly depends on the algorithms for any value of $B \le 60\%$,
showing that the strategy followed by the algorithms is important.
In principle, there are no paths that are definitely more promising than others and it is difficult to decide which one is best.
When $m>n$, the large number $m$ of rows makes the decision even harder.
Finally, 
all the algorithms report more or less the same rewards (all the curves are near to each other), independently of the $\theta$ value when $B\ge 80\%$. In this case, the larger budget   balances the algorithm cleverness. 

To confirm that the problem difficulty increases with the value of $m$,
observe that
the best performing Algorithm \apxmrc performs better when $m < n$ than when $m > n$.
The collected reward by \apxmrc (solid red line) almost coincides with that of the optimal Algorithm \optsc (solid black line) for the scenarios with $m < n$.
In fact, the ratio $\rho=\frac{\mathcal{R}(\apxmrc)}{\mathcal{R}(\optsc)} \rightarrow 1$ when $m < n$ for any value of $\theta$ (e.g., Figure~\ref{fig:plot_compare_200}), while when $m > n$ the ratio $\rho \rightarrow 1$ only for higher values of $\theta$ (e.g., Figure~\ref{fig:plot_compare_700}). When $\theta$ is small and $m$ is large, \apxmrc does not find the optimal solution,
which instead is computed by \optsc.
Note that  \apxmrc always outperforms  \apxmre (dashed red line).
The ratio  between \apxmre and \optsc  is well above $0.9$ (but below the ratio of \apxmrc) when $B \ge 40\%$,
and around $0.8$ otherwise.
This is because \apxmre greedily adds the best vertex $v_{i,j}$ having the maximum ratio between its reward and the cost for reaching it, 
without including the reward of the vertices it passes through in its selection criterion. Instead, \apxmrc considers also the reward of such vertices for
the selection criterion. 
Similarly, the reward collected from \gdymc (solid blue line) is greater than that of \gdyme (dashed blue line). 
In all plots, the ratio $\rho$ between \gdymc and \optsc is above $0.9$.
Instead, the ratio  between \gdyme and \optsc increases with $B$ from $0.7$ to $0.9$, and more. When $B$ is small, \gdyme  performs much worse than \gdymc because it can spend a lot of budget to cover isolated high reward items. When $B \ge 80\%$ and $m<n$, \gdyme 
outperforms \gdymc probably because the single reward becomes the most important selection criterion to
distinguish among important and less important items.

In almost all the plots, up to $B=40\%$, \gdymc outperforms \apxmre, that is, the cumulative reward
beats the criteria that consider the cost. It seems that \gdymc gives the precedence to the cluster of items
with high reward independently of how far they are, while \apxmre can prefer the closest item even if is has small reward.
Namely, a small cost at the denominator of the selection criterion amplifies minimal rewards at the numerator for \apxmre.
This behavior of \gdymc is particularly important because \gdymc is computationally more efficient in time and space than \apxmre and \apxmrc.
So when the budget is limited and the computational efficiency matters, \gdymc 
is the best algorithm to use.  

Lastly, it can be experimentally seen that the reward achieved by \apxmre and \apxmrc is much larger than the guaranteed
reward. 
In our experiments, both \apxmre and \apxmrc  collect more than $90\%$ of the optimum reward, i.e., 
the ratio $\rho \ge 0.9$, while the guaranteed approximation ratio 
proved in Section~\ref{sec:approximation} is $\approx 0.32$.

\subsection{On Real Data}
To further asses the effectiveness of the proposed algorithms, in this section we evaluate their performance on a set of instances 
coming from real data in a vineyard irrigation scenario.
These data sets were obtained
from a large scale commercial vineyard located in central California~\cite{thayer2018multi}.
The section we consider has $274$ rows and $214$ columns (i.e., aisle-graph $A(274, 214)$), resulting in approximately $60,000$ vines in the aisle-graph.
For each (internal) vertex $v \in V$, the reward is set to $\mathcal{R}(v) = |T - m(v)|$, where $T$ is a constant indicating the desired soil moisture in the vineyard (provided by a human expert), and $m(v)$ is the soil moisture at vertex $v$. 
This reward is the difference between desired moisture and actual moisture, thus revealing how underwatered or overwatered a vine is. 
Due to the large size of the ranch, soil moisture values were 
manually sampled at discrete locations  using a probe
equipped with a GPS.
From the finite set of samples, a soil moisture map for the whole block was obtained using the  Kriging algorithm  for interpolation~\cite{oliver1990kriging}.

Figure~\ref{img:plot_heatmap_1010} shows one of the soil moisture maps used in our experiments on real data.
In particular, data were collected every two weeks from this vineyard and used to produce ten soil moisture maps. 
These  maps (e.g., Figure~\ref{img:plot_heatmap_1010}) were then used to test the proposed algorithms, and the results were averaged across each reward map.
For reasons of similarity with the synthetic data, we have also tested our results on the transposed soil moisture maps of size $214$ rows and $274$ columns (i.e., aisle-graph $A(214, 274)$) with $m < n$.
This is reasonable since the structure of the constrained aisle graph and the soil moisture are independent of each other.

For each of the ten scenarios, we run \gdyme, \gdymc, \apxmre, \apxmrc, and \optsc, with an increasing budget.
Finally, we plot the average of the results along with their confidence interval.

Figure~\ref{fig:experiments-real} compares the reward for different algorithms on the real ten soil moisture maps.
The plots in Figure~\ref{fig:experiments-real} on the left  matches the trend already seen in the plots in Figure~\ref{fig:experiments} (synthetic data) 
when $\theta=0.8$ because the collected reward $\mathcal{R}$ when $B=20\%$ is slightly above $20\%$, when $B=40\%$ is above $60\%$, and $B=80\%$ is slightly above $80\%$. This is not surprising because in the literature Zipf distributions with parameter $\theta=0.8$ model several real distributions.

In the right plots of Figure~\ref{fig:experiments-real}, the most interesting difference between the synthetic and the real data happens for algorithm \gdyme (dashed blue line)and \gdymc (solid blue line). 
In the synthetic data, \gdymc outperforms \gdyme; while in real data,  \gdymc and \gdyme perform the same.
This means that \gdyme, which takes into account for selection only the single maximum reward but adds the
rewards of all traversed vertices, performs the same selection as \gdymc, which takes into account for selection the cumulative reward. In
this case it appears that the cumulative reward is dominated by the largest reward.

The ratio between \apxmrc and \optsc tends to $1$ already when $B=20\%$ for both cases $m>n$ and $m<n$. The performance of  the case $m<n$ is more similar to that of the case $m>n$ in the real scenario than in the synthetic one. This is arguably due to the fact that  $m$ and $n$ are much closer in relative magnitude, i.e., $\frac{274}{214}=1.28$, than $100$ and $50$ whose ratio is $\frac{100}{50}=2$.
The \apxmre algorithm poorly performs when $B \le 40\%$, as it happens for the synthetic data only when $m >n$. In contrast, when $m<n$, \apxmre is always above \gdymc, which is the same as \gdyme.
This can be explained imagining the transpose of
Figure~\ref{img:plot_heatmap_1010}: all the rows look quite similar among them and so
it is more profitable to collect the reward from the closest rows.
In conclusion, the performance of the proposed algorithms on the real scenario is well above the performance guaranteed for every percentage of $B$.

\begin{figure}[ht]
    \captionsetup[subfloat]{farskip=-3pt,captionskip=-7pt}
	\centering
	\subfloat[$A(274,214)$.]{%
		\includegraphics[scale=0.90]{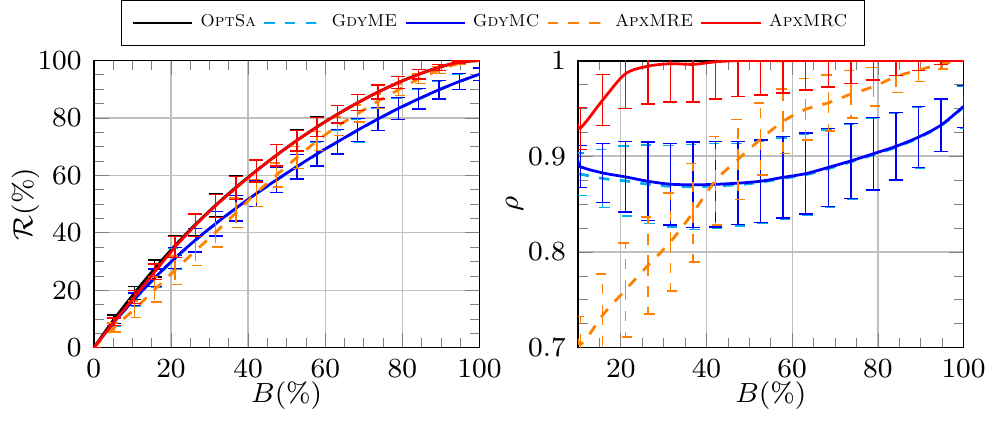}
		\label{fig:plot_compare_1010}
	}
	\hfill
	\subfloat[$A(214,274)$.]{%
		\includegraphics[scale=0.90]{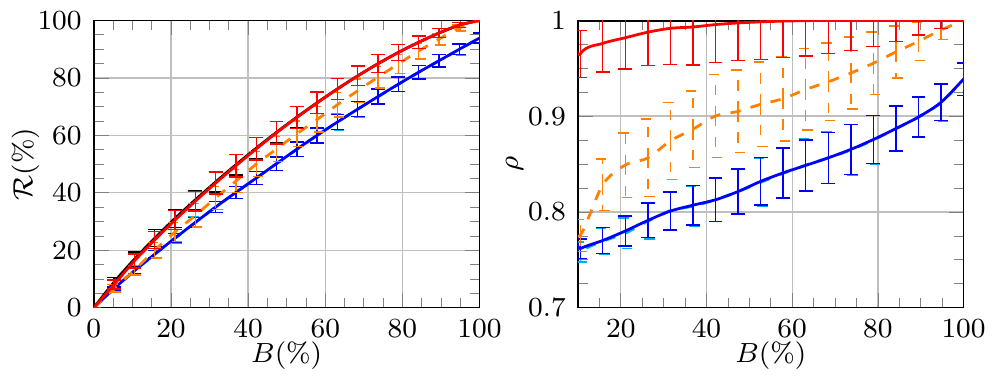}
		\label{fig:plot_compare_1110}
	}
	\caption{Performance comparison of the proposed algorithms on real graphs.}
	\label{fig:experiments-real}
\end{figure}

\section{Conclusions}\label{sec:conclusions}
In this paper, we focused on \prob, a route planning problem in constrained aisle-graphs with single access subject to a given budget, and aiming to maximize an assigned reward.
We first proposed a polynomial-time algorithm to optimally solve \prob.
This solution, although optimal, has a  computational cost that can become problematic when the size of the aisle-graph grows.
Thus, we proposed four simpler and faster greedy algorithms with reduced computational cost in time and space.
For two of them, we exploit submodularity properties and achieve a provably guaranteed approximation ratio of \(\frac{1}{2}(1 - \frac{1}{e})\).
Finally, we extensively evaluated the new four algorithms with respect to the optimal solution on synthetic and real data.
As future work, it would be worth to apply similar approaches to different topologies for the input graph. 
Different applications as well as different robot capabilities may lead to interesting scenarios.
We also extend our investigation to the case where swarms of robots can jointly accomplish tasks.


\bibliographystyle{IEEEtran}
\bibliography{IEEEabrv,main-t-ro-20_revised}

\begin{IEEEbiography}
[{\includegraphics[width=1in,height=1.25in,clip,keepaspectratio]{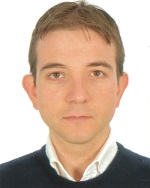}}] {Francesco Betti Sorbelli}
received the Bachelor and Master degrees {\em cum laude} in Computer Science from the University of Perugia, Italy, in 2007 and 2010, respectively, 
and his Ph.D. in Computer Science from the University of Florence, Italy, in 2018.
He was a postdoc researcher at University of Perugia in 2018 under the supervision of Prof. Cristina M. Pinotti.
Currently, he is a postdoc at the Missouri University of Science and Technology University, USA, under the supervision of Prof. Sajal K. Das.
His research interests include algorithms design, combinatorial optimization, unmanned vehicles.
\end{IEEEbiography}

\begin{IEEEbiography} [{\includegraphics[width=1in,height=1.25in,clip,keepaspectratio]{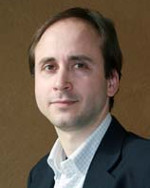}}] {Stefano Carpin} is Professor in the Department of Computer Science and 
Engineering at the University of California, Merced. He received his \emph{Laurea} and Ph.D. degrees
in electrical engineering and computer science from the University of
Padova, Italy in 1999 and 2003, respectively. From 2003 to 2006 he
held faculty positions with Jacobs University Bremen, Germany. Since
2007 he has been with the School of Engineering at UC Merced, where he
established and leads the robotics laboratory. His research interests
include mobile and cooperative robotics for service tasks, and robot
algorithms. He served as associated editor for the IEEE Transactions on
Automation Science and Engineering and for the IEEE Transactions
in Robotics and he is the founding chair for the Department of Computer
Science and Engineering at the University of California, Merced.
\end{IEEEbiography}

\begin{IEEEbiography} [{\includegraphics[width=1in,height=1.25in,clip,keepaspectratio]{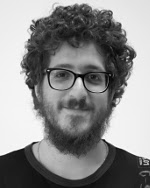}}] {Federico Corò}
received the Bachelor and Master degrees {\em cum laude} in Computer Science from the University of Perugia, Italy, in 2014 and 2016, respectively and his Ph.D. in Computer Science in 2019 at Gran Sasso Science Institute (GSSI), L'Aquila, Italy.
Currently, he is a postdoc researcher in the Department of Computer Science at La Sapienza in Rome, Italy.
His research interests include several aspects of theoretical computer science, including combinatorial optimization, network analysis, and the design and efficient implementation of algorithms.
\end{IEEEbiography}

\begin{IEEEbiography} [{\includegraphics[width=1in,height=1.25in,clip,keepaspectratio]{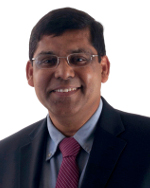}}] {Sajal K. Das} is a professor of computer science and Daniel St. Clair Endowed Chair at Missouri University of Science and Technology. 
His research interests include wireless sensor networks, mobile and pervasive computing, cyber-physical systems and IoT, smart environments, cloud computing, cyber security, and social networks. He serves as the founding Editor-in-Chief of Elsevier's Pervasive and Mobile Computing journal, and as Associate Editor of several journals including the IEEE Transactions of Mobile Computing, IEEE Transactions on Dependable and Secure Computing, and ACM Transactions on Sensor Networks. He is an IEEE Fellow.
\end{IEEEbiography}

\begin{IEEEbiography} [{\includegraphics[width=1in,height=1.25in,clip,keepaspectratio]{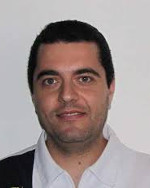}}] {Alfredo Navarra}
is Associate Professor since 2015 at the Mathematics and 
Computer Science Dept, University of Perugia, Italy. He received his Ph.D. 
in Computer Science in 2004 from ``Sapienza'' University of Rome. 
Before joining the University of Perugia in 2007, 
he has been with various international research institutes like the INRIA  
of Sophia Antipolis, France; the Dept of Computer Science at the Univ. 
of L'Aquila, Italy; the LaBRI, Univ. of Bordeaux, France.
His research interests include algorithms, computational complexity, 
distributed computing and networking.
\end{IEEEbiography}

\vspace{-0.3in}
\begin{IEEEbiography} [{\includegraphics[width=1in,height=1.25in,clip,keepaspectratio]{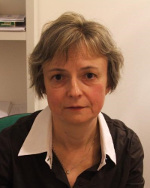}}] {Cristina M. Pinotti}
received the Master degree {\em cum laude} in Computer Science from the University of Pisa, Italy, in 1986.
In 1987-1999, she was Researcher with the National Council of Research in Pisa.
In 2000-2003, she was Associate Professor at the University  of Trento.
Since 2004, she is a Full Professor at the University of Perugia.
Her current research interests include the design and analysis of algorithms for wireless sensor networks and communication networks.
\end{IEEEbiography}

\end{document}